\documentclass[letterpaper, 10 pt, conference]{ieeeconf}  

\IEEEoverridecommandlockouts                              

\usepackage[usenames,dvipsnames,table,xcdraw]{xcolor}  
\usepackage{hyperref}
\usepackage{amsmath,amsfonts}
\usepackage{algorithm}                    
\usepackage{algorithmicx}                  
\usepackage[noend]{algpseudocode}
\usepackage{graphicx}      
\usepackage{multirow}      
\usepackage{subcaption}
\usepackage{svg}

\usepackage{amsthm}

\newtheoremstyle{nameddefinition}
  {3pt}   
  {3pt}   
  {\normalfont} 
  {}      
  {\bfseries} 
  {.}     
  { }     
  {\thmname{#1}\thmnumber{ #2}\thmnote{: #3}}

\theoremstyle{nameddefinition}
\newtheorem{definition}{Definition}
\newtheorem{observation}{Observation}
\newtheorem{proposition}{Proposition}
\newtheorem{assumption}{Assumption}
\newtheorem{corollary}{Corollary}
\newtheorem{lemma}{Lemma}
\newtheorem{theorem}{Theorem}

\newcommand{\cspace}{\ensuremath{\mathcal{C}_{\mathrm{space}}}}
\newcommand{\cspacei}{\ensuremath{\mathcal{C}_{i}}}
\newcommand{\cfreei}{\ensuremath{\mathcal{C}_{i}^{\mathrm{free}}}}

\newcommand{\compcspace}{\ensuremath{\mathcal{C}_{\mathrm{comp}}}}
\newcommand{\compfree}{\ensuremath{\mathcal{C}_{\mathrm{comp}}^{\mathrm{free}}}}

\newcommand{\sbmp}{{\sc SBMP}}

\newcommand{\mrmp}{{\sc MRMP}}

\newcommand{\arc}{{\sc ARC}}
\newcommand{\barc}{{\sc Bounded ARC}}
\newcommand{\aoarc}{{\sc AO-ARC}}
\newcommand{\aorrtc}{{\sc AORRTC}}
\newcommand{\rrtc}{{\sc RRTC}}
\newcommand{\rrtstar}{{\sc RRT*}}
\newcommand{\drrtstar}{{d\sc{RRT*}}}
\newcommand{\strrtstar}{{\sc ST-RRT*}}
\newcommand{\ppstrrtstar}{{\sc PP-ST-RRT*}}

\newcommand{\soc}{sum-of-cost}
\newcommand{\makespan}{makespan}
\newcommand{\asao}{a.s.a.o.}
\newcommand{\aox}{{\sc AO-$x$}}

\title{\LARGE \bf
AO-ARC: Almost-Surely Asymptotically Optimal Multi-Robot Motion Planning with ARC
}

\author{
James D. Motes$^{1}$,  Marco Morales$^{1,2}$, and Nancy M. Amato$^{1}$
\thanks{$^{1}$James D. Motes, Marco Morales, and Nancy M. Amato are with the Parasol Lab, School of Computing and Data Science, University of Illinois at Urbana Champaign, Champaign, IL, 61820 USA.
\{\tt jmotes2, moralesa, namato\}@illinois.edu}%
\thanks{$^{2}$Marco Morales is also with the Department of Computer Science at Instituto Tecnol\'ogico Aut\'onomo de M\'exico (ITAM), Mexico City, M\'exico.}
}

\begin{document}

\maketitle
\thispagestyle{empty}
\pagestyle{empty}

\begin{abstract}
We present {\aoarc}, an anytime multi-robot motion planning ({\mrmp}) method that achieves initial solution times on par with state-of-the-art {\mrmp} feasibility solvers while converging faster and more reliably than existing anytime {\mrmp} methods as the number of robots increases.
{\aoarc} adapts the {\aox} meta-algorithm for converting feasibility solvers into anytime algorithms by iteratively calling the original {\arc} method on bounded {\mrmp} instances under a makespan cost metric.
This exploits the adaptive (de)coupling of {\arc} while maintaining the consistent cost bound across robot (de)compositions needed for {\aox}.
We provide theoretical analysis proving the asymptotic optimality properties of {\aoarc} and conduct empirical evaluation on a set of 2D scenarios with different levels of coordination complexity and a 3D manipulator scenario representative of real-world applications.
\end{abstract}

\vspace{-1em}
\section{Introduction}
\label{sec:introduction}

Multi-robot motion planning ({\mrmp}) seeks to \textit{quickly} find \textit{high quality} paths transitioning a team of robots between their respective start and goal locations.
Existing {\mrmp} methods have historically made a choice between quickly finding solutions \textit{or} finding high quality solutions.
This choice was forced by the exponential expansion of the composite state space as the number of robots increases.
Decoupled methods, typically using a prioritized approach, plan for robots individually to avoid the composite space, sacrificing coordination and guarantees for faster planning times.
Coupled approaches accept the cost of directly searching the composite space in order to achieve higher levels of coordination and provide completeness and solution quality guarantees.
The Adaptive Robot Coordination (\arc) approach seeks to balance these by planning individual robot paths first, and then introducing coupled subproblems around detected conflicts in the individual paths, focusing coordination only where it is needed. 
Although {\arc} can maintain probabilistic completeness with these local subproblems, it does not provide asymptotic solution-quality guarantees.

This paper studies how to modify {\arc} to provide both quick planning \textit{and} high quality solution.
We consider total team solution time, a {\makespan}-style metric that measures the time until the last robot reaches its goal, rather than a {\soc}-style metric.
This metric induces a single global deadline that applies uniformly across {\arc}, including the individual robots, local conflict resolution subproblems, and the full composite problem.

\begin{figure}[t]
    \centering
    \hspace{-1.2em}
    \begin{subfigure}[t]{0.165\textwidth}
        \centering
        \includegraphics[width=\linewidth]{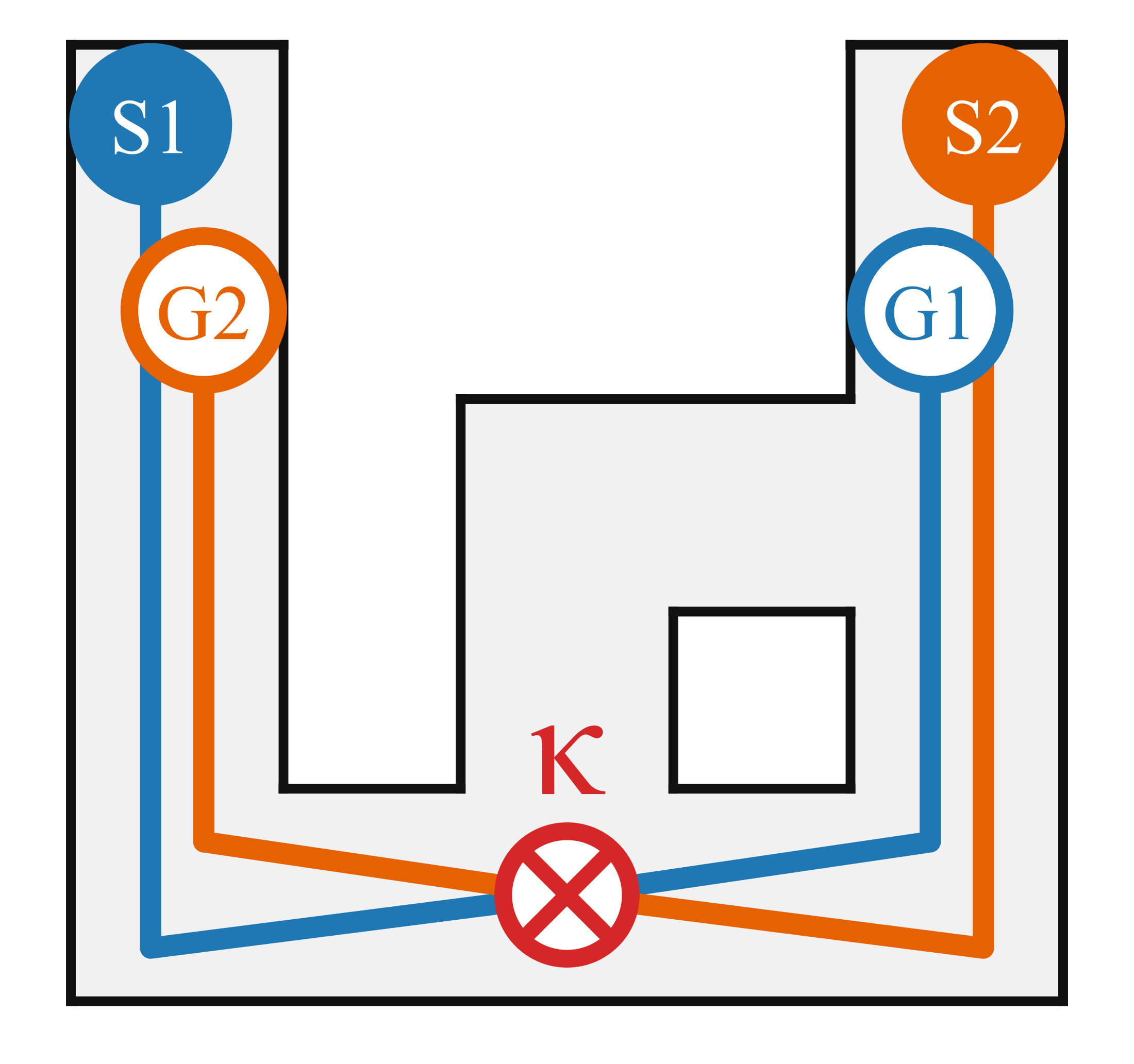}
        \caption{Initial Paths}
        \label{fig:repair-initial}
    \end{subfigure}
    \hspace{-0.6em}
    \begin{subfigure}[t]{0.165\textwidth}
        \centering
        \includegraphics[width=1.0\linewidth]{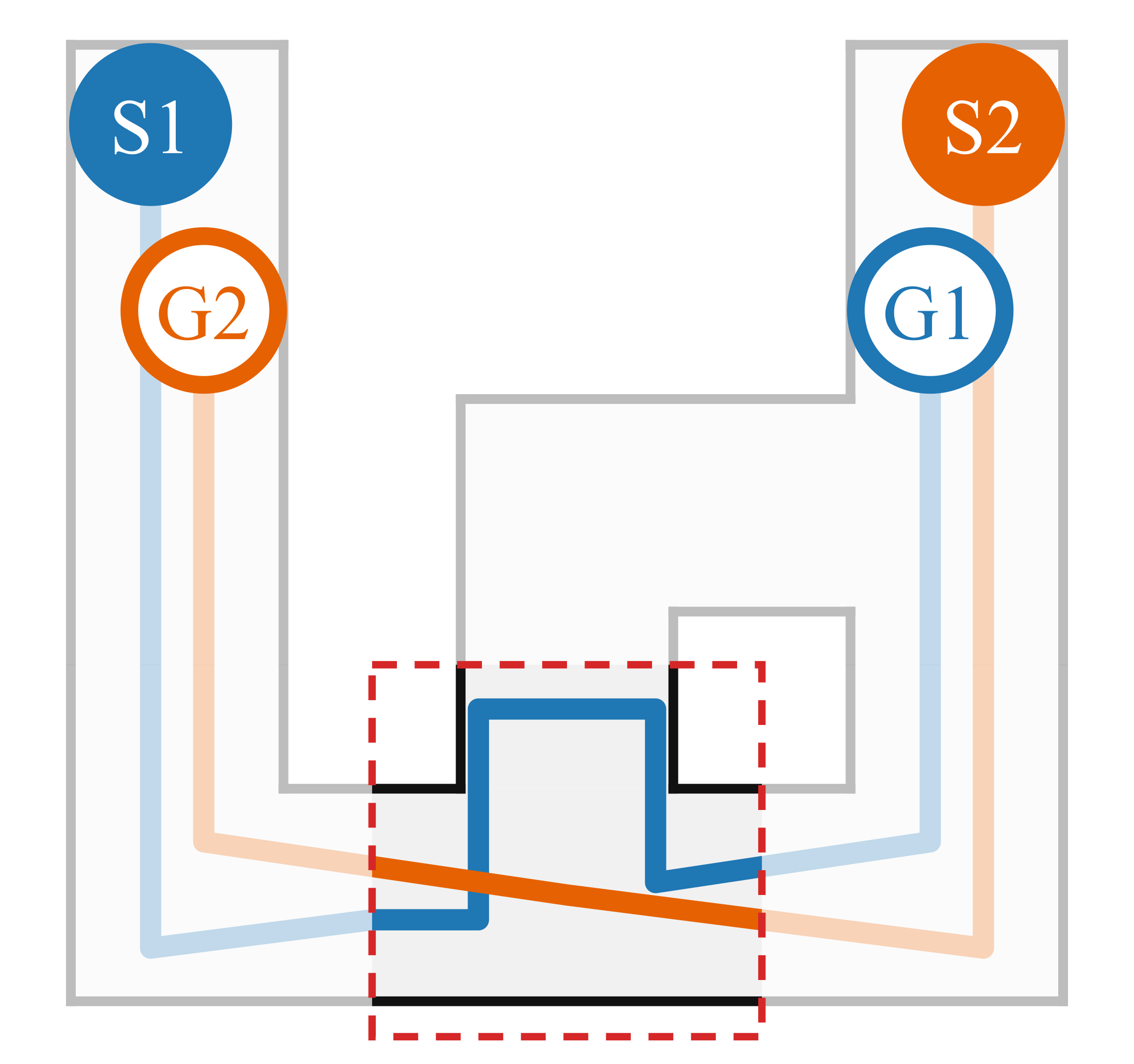}
        \caption{Local Region}
        \label{fig:repair-local}
    \end{subfigure}
    \hspace{-0.6em}
    \begin{subfigure}[t]{0.16\textwidth}
        \centering
        \includegraphics[width=1.0\linewidth]{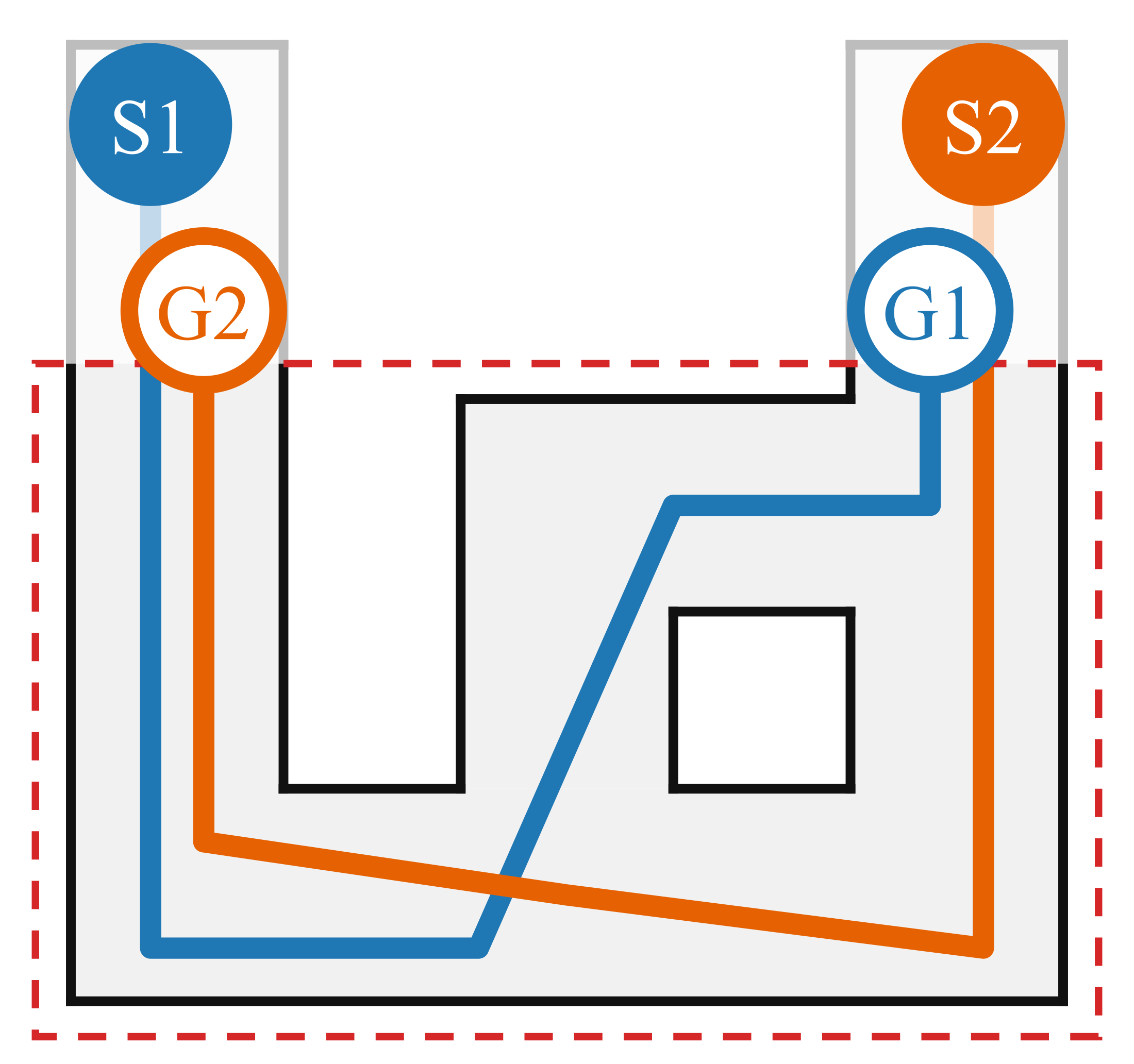}
        \hspace{-0.6em}
        \caption{Expanded Region}
        \label{fig:repair-expanded}
    \end{subfigure}
    \hspace{-1.2em}
    \caption{
    \small
    {\arc} and {\aoarc} both start by planning an initial path for each robot independently.
    Each conflict $\kappa$ in these paths is resolved by creating a subproblem in a local region around the conflict.
    The solution to the subproblem is used to patch the conflicting paths.
    {\arc}, as a feasibility planner, stops here and thus cannot offer solution quality guarantees because of the local decision making.
    {\aoarc} uses the {\aox} meta-algorithm's state-cost formulation to force subproblem expansion until it includes the optimal solution in the local region.
    }
    \label{fig:repair}
    \vspace{-1em}
\end{figure}

We propose {\aoarc}, an almost-surely asymptotically optimal (\asao) extension of {\arc} based on the {\aox} meta-algorithm~\cite{hauser2016asymptotically}.
{\aoarc} iteratively solves bounded {\mrmp} feasibility problems under a team-time bound $B$ and tightens $B$ whenever a better solution is found.
Within each bounded attempt, a bounded {\rrtc}~\cite{wilson2025aorrtc} is used to compute individual robot paths and a composite variant is used to solve local subproblems.
Individual robot paths are given the global bound $B$, and a local bound is computed for subproblems such that the conflict resolution cannot result in global paths violating $B$.
When no bounded solution is found in the current restricted local subproblem, the subproblem is expanded; in the worst case, repeated expansion and conflict detection recover the full composite bounded problem.
The resulting algorithm preserves {\arc}'s adaptive use of coupling while using the team-time bound to connect local repairs to global solution quality.

We evaluate {\aoarc} on a set of 2D mobile robots and planar manipulator scenarios to evaluate the relationship of planning time and solution quality with the number of robots and induced conflicts.
Additionally, we demonstrate improved convergence compared to state-of-the-art {\sc AO-MRMP} solvers on a set of randomly generated tasks for teams of 4 and 8 Panda manipulators in a shared workspace.
In summary, our contributions are:
\begin{itemize}
    \item A makespan-based {\aox} formulation for MRMP
    that remains compatible with {\arc}'s adaptive robot
    decompositions.
    \item {\aoarc}, an anytime {\arc} variant that combines
    bounded feasibility with local conflict repair.

    \item An almost-sure asymptotic optimality analysis for
    {\aoarc} under the stated assumptions.

    \item An empirical evaluation showing fast initial planning
    and improved anytime performance across mobile-robot
    and manipulator benchmarks.
\end{itemize}

\section{Background and Related Work}
\label{sec:related-work}

We review the planning ideas most relevant to {\aoarc}:
asymptotically optimal planning via bounded feasibility, {\mrmp}
coupling strategies, and {\arc}~\cite{solis2024adaptive}.

\subsection{Asymptotically Optimal Sampling-Based Planning}
\label{rw:ao-sbmp}


Sampling-based motion planning ({\sbmp}) has been extended from feasibility to asymptotic optimality through methods such as rewiring-based tree search~\cite{kf-sbaomp-ijrr-11}, informed sampling~\cite{gammell2020batch}, and optimized collision checking~\cite{wilson2025nearest}.
{\aox} instead converts a cost-bounded planning problem into a feasibility problem in an augmented state-cost space and repeatedly calls a feasibility planner under a tightening cost bound~\cite{hauser2016asymptotically}.
If the feasibility planner is well behaved, meaning that it solves feasible bounded queries and contracts expected suboptimality, the resulting anytime planner is asymptotically optimal.
{\aorrtc} recently instantiated {\aox} with {\rrtc}-style feasible planning~\cite{wilson2025aorrtc}.
{\aoarc} applies this bounded-feasibility view to {\mrmp} with a makespan objective.


\subsection{Multi-Robot Motion Planning}
\label{rw:mrmp}

{\mrmp} seeks collision-free trajectories for multiple robots from assigned starts to goals.
The main difficulty is that the composite configuration space {\compcspace} grows exponentially with the number of robots.
Coupled methods search this composite space directly or implicitly, enabling strong coordination and, in some cases, completeness or optimality guarantees~\cite{sl-uppccdpmrs-2002,ssdhb-dsaiammp-20}.
Decoupled methods, including prioritized planners, plan for robots sequentially and treat higher priority robots as dynamic obstacles~\cite{van2005prioritized,kerimov2025si}.
These methods are often fast but generally lose team-level completeness or optimality guarantees.

Hybrid methods attempt to obtain the speed of decoupled planning while escalating to coupled planning only when needed~\cite{wc-sefmpp-15,smsa-rmmpucs-21,solis2024adaptive}.
{\arc} is a hybrid approach that begins with independent robot paths, detects conflicts, and creates coupled local subproblems around those conflicts.
Subproblems are expanded in time, $\cspace$ bounds, and robot set when local repairs fail. 
In the limit, this expansion can recover the full composite problem.
With a probabilistically complete subproblem solver, {\arc} is probabilistically complete, but its local repair decisions do not provide solution quality guarantees.
{\aoarc} preserves this adaptive coupling mechanism while using {\aox} bounded feasibility to obtain anytime improvement under a makespan bound.

\subsubsection{Asymptotically Optimal MRMP}
\label{rw:ao-mrmp}

Compared with single robot planning, relatively little work has studied asymptotically optimal sampling-based {\mrmp}.
Direct application of single robot asymptotically optimal methods to the composite space ignores exploitable multi-robot structure~\cite{hartmann2025sampling}.
{\drrtstar} builds individual roadmaps and searches an implicit tensor-product graph with {\rrtstar}-style rewiring, yielding asymptotic optimality for several metrics~\cite{ssdhb-dsaiammp-20}.
{\strrtstar} is asymptotically optimal for single robot space-time planning~\cite{grothe2022st}.
When used inside a prioritized multi-robot planner~\cite{kerimov2025si}, however, it does not provide team-level optimality.
We compare against these coupled, decoupled, and composite space baselines in Section~\ref{sec:evaluation}.

\section{Problem Definition}
\label{sec:problem}

This section formulates the {\mrmp} problem in the form required by {\aox}~\cite{hauser2016asymptotically}.
We first define the bounded-speed {\mrmp} problem.
We then show that bounded makespan planning is equivalent to a feasible planning problem in the state-cost space.

\subsection{Bounded-Speed MRMP}
\label{sec:problem:mrmp}
 
\begin{definition}[{\mrmp} Instance] \label{def:mrmp-instance}
A {\mrmp} instance $P=\{R,O,s,G, V, \rho\}$ consists of a team of $n$ robots $R=\{r_1,\ldots,r_n\}$ moving in a shared workspace containing static obstacles $O$.
Each robot $r_i$ has configuration space {\cspacei}, free space $\cfreei\subseteq\cspacei$, start configuration $s_i\in\cfreei$, a set of goal configurations $G_i\subseteq\cfreei$, and a maximum velocity $v_i^{\mathrm{max}}$. The composite start is $s=(s_1,\ldots,s_n)$.
The composite goal set is $G=G_1\times\ldots\times G_n$. 
$V=(v_1^{\mathrm{max}},\ldots,v_n^{\mathrm{max}})$ is the set of maximum velocity constraints.
\end{definition}

The parameter $\rho$ denotes the fixed resolution used for motion validation and conflict detection.
This is the same concept described in~\cite{smsa-rmmpucs-21} extending the traditional resolution model used for interpolation in {\sbmp} to {\mrmp} and conflict detection.
All validity, feasibility, and optimality statements in this paper are with respect to this resolution model.
For brevity, we will omit further notation of $\rho$.

The composite configuration space is $\compcspace = \mathcal{C}_1 \times \cdots \times \mathcal{C}_n.$
A composite configuration \(q=(q_1,\ldots,q_n)\in\compcspace\) is valid if $q_i\in\cfreei$ for all $i$ and no pair of robots is in collision.
The set of valid composite configurations is $\compfree$.




\begin{definition}[Bounded-Speed Composite Motions]\label{def:bounded-speed-motions}
A geometric motion for robot $r_i$ is a continuous map $\sigma_i : [0,1]\rightarrow\cfreei$ with $\sigma_i(0)=q_i$ and $\sigma_i(1)=q_i'$.
Let $\Delta t_i(\sigma_i,v_i^{\mathrm{max}})$ denote the minimum duration of $\sigma_i$ while satisfying the velocity bound of robot $r_i$.
\end{definition}

A composite motion in $\compcspace$ is a tuple $e=(\sigma_1,\ldots,\sigma_n)$ connecting $q=(q_1,\ldots,q_n)$ to $q'=(q_1',\ldots,q_n')$.
Its duration is defined by the slowest individual motion:
\[
\Delta t(e,V) = \max_{i\in\{1,\ldots,n\}}\Delta t_i(\sigma_i,v_i^{\mathrm{max}})
\]
All robots execute their motions synchronously over this common duration.
If the length of $\sigma_i$ is 0, $r_i$ remains stationary during $e$.
The composite motion $e$ is valid if its synchronized execution remains in $\compfree$ for all times $t\in[0,\Delta t(e,V)].$

A composite path is a finite sequence of valid composite motions $E=\{e_1,\ldots,e_K\}$ such that $e_1$ starts at $s$, $e_K$ ends in $G$, and consecutive motions share endpoints.
Its duration is 
\[
T(E)=\sum_{k=1}^{K}\Delta t(e_k,V).
\]
The composite path $E$ induces a continuous composite trajectory $q_E:[0,T(E)]\rightarrow\compfree$.
The individual robot trajectory $\pi_i$ is the projection of $q_E$ onto $\cspacei$.




\begin{definition}[Valid team trajectory and makespan]\label{def:team-traj-and-makespan}
Let $T_i$ be the first time at which $r_i$ reaches $G_i$ and remains there.
We remove any trailing wait after all robots have reached their goals and remain there, so the terminal time of the composite trajectory satisfies
\[
T(E)=\max_{i\in\{1,\ldots,n\}} T_i.
\]
The team trajectory is $\Pi=(\pi_1,\ldots,\pi_n)$, and its cost is the team completion time, or makespan, 
\[
J(\Pi)=\max_{i\in\{1,\ldots,n\}}T_i = T(E).
\]
\end{definition}

Let $\mathcal{T}(P)$ denote the set of all trajectories for instance $P$.
The optimal makespan value is 
\[
J^*=\inf_{\Pi\in\mathcal{T}(P)}J(\Pi).
\]
If the infimum is attained, an optimal solution is any
\[
\Pi^*\in\arg\min_{\Pi\in\mathcal{T}(P)}J(\Pi).
\]



\begin{definition}[Bounded {\sc \textbf{MRMP}} feasibility]\label{def:bounded-feasibility}
For a bound $B\in\mathbb{R}_{\ge 0}$, the bounded {\mrmp} feasibility problem $P_B$ asks for a valid team trajectory $\Pi\in\mathcal{T}(P)$ such that $J(\Pi)\le B$.
\end{definition}

\subsection{State-Cost Bounded Feasibility}
\label{sec:problem:state-cost}

\begin{definition}[{\sc \textbf{MRMP}} state-cost space]
The {\mrmp} state-cost space is $Z=\compcspace\times\mathbb{R}_{\ge 0}$, $z=(q,c)$, where $c$ is the global elapsed team-time from the start. The initial state-cost point is $z_{init}=(s,0)$.    
\end{definition}

{\aox} represents optimal planning by augmenting the state with accumulated cost.
For makespan {\mrmp}, the accumulated cost is elapsed team-time, so, extending the notation from the original {\aox} work~\cite{hauser2016asymptotically}, we use a constant running cost $L(q,u)=1$ and zero terminal cost $\Phi(q)=0$ where $u$ is the control input.
Equivalently, in the continuous bounded-speed model,
\[
\dot q_i(t)=u_i(t), \qquad \|u_i(t)\|\leq v_i^{\mathrm{max}},
\]
and
\[
\dot z(t)=
\begin{bmatrix}
\dot q(t)\\
\dot c(t)
\end{bmatrix}
=
\begin{bmatrix}
u(t)\\
1
\end{bmatrix}.
\]
In the edge-based representation, traversing a composite edge $e$ updates the cost coordinate by $c'=c+\Delta t(e,V)$.

\begin{observation}[Makespan is elapsed state-cost]\label{obs:makespan-is-state-cost}
For any valid team trajectory $\Pi$ induced by a composite path with terminal time $T$,
\[
C(\Pi)
=
\int_0^T L(q(t),u(t))\,dt+\Phi(q(T))
=
\int_0^T 1\,dt
=
T.
\]
Thus, with $T=J(\Pi)$, minimizing makespan is equivalent to minimizing {\aox} trajectory cost
with $L\equiv 1$ and $\Phi\equiv 0$.
\end{observation}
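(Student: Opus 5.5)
The statement follows by evaluating the {\aox} cost functional along the continuous trajectory $q_E$ induced by a composite path $E$, and then identifying the result with the makespan from Definition~\ref{def:team-traj-and-makespan}.

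First I fix a valid team trajectory $\Pi$ induced by a composite path $E=\{e_1,\ldots,e_K\}$. Let $q_E:[0,T]\to\compfree$ be its trajectory, with $T=T(E)=\sum_k \Delta t(e_k,V)$. Along each edge $e_k$ the robots move synchronously over the common duration $\Delta t(e_k,V)$, each with speed at most $v_i^{\mathrm{max}}$. So $q_E$ is an admissible solution of $\dot q_i=u_i$ with $\|u_i\|\le v_i^{\mathrm{max}}$, and the cost coordinate obeys $\dot c=1$. Integrating edge by edge from $c(0)=0$ (the initial state-cost point $z_{init}=(s,0)$) and using the update $c'=c+\Delta t(e,V)$ gives $c(T)=\sum_k \Delta t(e_k,V)=T(E)$. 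Since $L\equiv1$ and $\Phi\equiv0$, the cost integral is just $\int_0^T 1\,dt+0=T$. This is simply the Lebesgue measure of $[0,T]$ and needs no regularity of $u$ beyond measurability.

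Next I identify $T$ with $J(\Pi)$. Definition~\ref{def:team-traj-and-makespan} removes any trailing wait after all robots have reached their goals, so the terminal time satisfies $T(E)=\max_i T_i=J(\Pi)$. Hence $C(\Pi)=J(\Pi)$ for every $\Pi\in\mathcal{T}(P)$.

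The equivalence of the two minimization problems then follows because the two objectives agree pointwise on $\mathcal{T}(P)$. Their infima coincide, both equal to $J^*$, their argmin sets coincide, and the sublevel sets $\{J\le B\}$ and $\{C\le B\}$ coincide. This last fact ties the observation to the bounded problem $P_B$ of Definition~\ref{def:bounded-feasibility}.

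The only delicate step is the trailing-wait convention. If a composite trajectory kept idling in $G$ after every robot had arrived, $C$ would exceed $\max_i T_i$. I would therefore state explicitly that $\mathcal{T}(P)$ consists of truncated trajectories. Alternatively, truncation only lowers $C$ and leaves $J$ unchanged, so it does not affect the infimum or the minimizers. Either way, the two optimization problems are the same.
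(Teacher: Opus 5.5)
Your proposal is correct and takes essentially the same approach as the paper: you evaluate the cost functional with $L\equiv 1$ and $\Phi\equiv 0$ to get $C(\Pi)=T$, then use the trailing-wait truncation in Definition~\ref{def:team-traj-and-makespan} to identify $T(E)=\max_i T_i=J(\Pi)$, so the two objectives agree pointwise on $\mathcal{T}(P)$. The only imprecision is in your fallback remark: without the truncation convention the optimal values still coincide, but the minimizer sets do not, since an untruncated $J$-minimizer has $C>J^*$, so minimizers correspond only up to truncation; under the paper's convention this point does not arise.
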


For a bound $B$, define the bounded state-cost goal set
\[
G_B=\{(q,c)\in\compfree\times\mathbb{R}_{\ge 0} | q\in G, c\le B\}.
\]

\begin{proposition}[Bounded {\mrmp} and bounded state-cost feasibility]\label{prop:bounded-mrmp-and-state-cost-feasibility}
    A team trajectory $\Pi$ solves the bounded {\mrmp} feasibility problem $P_B$ if and only if its lifted state-cost trajectory starts at $(s,0)$ and reaches $G_B$.
\end{proposition}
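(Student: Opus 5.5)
The argument unfolds the definitions and uses Observation~\ref{obs:makespan-is-state-cost} to identify the cost coordinate with elapsed team-time.

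First, fix the \emph{lift}. Given a team trajectory $\Pi$ induced by a composite path $E=\{e_1,\ldots,e_K\}$ with composite trajectory $q_E:[0,T(E)]\to\compfree$, its lifted state-cost trajectory is $z(t)=(q_E(t),c(t))$. Here $c$ solves $\dot c=1$ with $c(0)=0$, so $c(t)=t$. In the edge-based form, $c$ starts at $0$ and is updated by $c'=c+\Delta t(e_k,V)$ along each edge. Either way the lifted trajectory starts at $(s,0)=z_{init}$, because $e_1$ starts at $s$. It terminates at $(q_E(T(E)),T(E))$, and summing the increments gives $c(T(E))=\sum_k\Delta t(e_k,V)=T(E)$, in agreement with Observation~\ref{obs:makespan-is-state-cost}.

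($\Rightarrow$) Suppose $\Pi$ solves $P_B$. Then $\Pi\in\mathcal{T}(P)$, so $q_E(t)\in\compfree$ for all $t$, and $q_E(T(E))\in G$ since $e_K$ ends in $G$. By Definition~\ref{def:team-traj-and-makespan} we have $J(\Pi)=T(E)$, and $J(\Pi)\le B$ by assumption. The terminal lifted point $(q_E(T(E)),T(E))$ therefore satisfies $q\in G$, $q\in\compfree$ and $c\le B$, so it lies in $G_B$.

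($\Leftarrow$) Suppose the lift of a valid team trajectory starts at $(s,0)$ and reaches $G_B$ at its terminal point $(q_E(T),c(T))$. Membership in $G_B$ gives $q_E(T)\in G$ and $c(T)\le B$. Since $c(T)=T=J(\Pi)$, we get $J(\Pi)\le B$, so $\Pi$ solves $P_B$. Validity is inherited, because $\Pi$ was assumed to be a valid team trajectory.

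The main obstacle is the meaning of ``reaches $G_B$.'' It should refer to the terminal point after trailing waits are trimmed, not to some intermediate time at which the composite configuration passes through $G$. A robot can pass through $G_i$ early and leave again, so $T_i$ is defined as the time it reaches $G_i$ \emph{and remains}. I would handle this with a short remark. If the lifted trajectory enters $G_B$ at some earlier time $t'$ with $q_E(t')\in G$, we can truncate there: the truncated path is still a valid composite path ending in $G$, its duration is $t'\le B$, and it solves $P_B$. Conversely, the trimming convention in Definition~\ref{def:team-traj-and-makespan} makes the terminal time equal to $\max_i T_i$. So ``reaches $G_B$'' and ``terminates in $G_B$'' are interchangeable for deciding whether $P_B$ is feasible.
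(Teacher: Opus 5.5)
Your proposal is correct and takes the same route as the paper. Both arguments use Observation~\ref{obs:makespan-is-state-cost} to identify the cost coordinate with elapsed team-time, then Definition~\ref{def:team-traj-and-makespan} to equate the terminal time with $J(\Pi)$, and conclude that reaching $q\in G$ with $c\le B$ is equivalent to $J(\Pi)\le B$; your explicit construction of the lift and your reading of ``reaches $G_B$'' as the trimmed terminal point are useful refinements that the paper's one-line proof leaves implicit.
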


\begin{proof}
    By Observation~\ref{obs:makespan-is-state-cost}, the cost coordinate $c$ equals elapsed team-time, and, by Definition~\ref{def:team-traj-and-makespan}, elapsed team-time equals the makespan $J(\Pi)$.
    Therefore reaching $q\in G$ with $c\le B$ is exactly equivalent to reaching all robot goals with $J(\Pi)\le B$.
\end{proof}
These definitions put bounded-speed {\mrmp} in the {\aox} form.
The composite state is $q$, the accumulated cost coordinate $c$ is global elapsed team-time, and the running cost is $L\equiv1$.
Therefore, solving $P_B$ is equivalent to solving a feasible planning problem in $Z$ with bounded goal set $G_B$.
Section~\ref{sec:method} uses this equivalence by repeatedly calling {\barc} under the current incumbent bound $B$.

\section{AO-ARC}
\label{sec:method}


In this section, we present {\aoarc}, an anytime {\mrmp} algorithm that applies the {\aox} meta-algorithm to the bounded-speed {\mrmp} problem defined in Section~\ref{sec:problem}.
The {\aox} layer (Alg.~\ref{alg:aoarc}) maintains an incumbent bound $B$ and iteratively calls {\barc} to attempt to solve the bounded feasibility problem $P_B$.
Whenever a returned trajectory satisfies $J(\Pi)<B$, {\aoarc} updates the incumbent and tightens the bound for the next call.

{\barc} adapts the individual path queries and the local conflict resolution subproblem strategies of {\arc} to the bounded state-cost space, using a bounded {\rrtc}~\cite{wilson2025aorrtc} as the planner for both.
The details of {\barc} are presented in Section~\ref{sec:method:bounded-arc}, and the theoretical analysis of {\aoarc} is presented in Section~\ref{sec:method:theoretical}.

\begin{figure*}[t]
    \centering
    \hfill
    \begin{subfigure}[t]{0.245\textwidth}
        \centering
        \includegraphics[width=\linewidth]{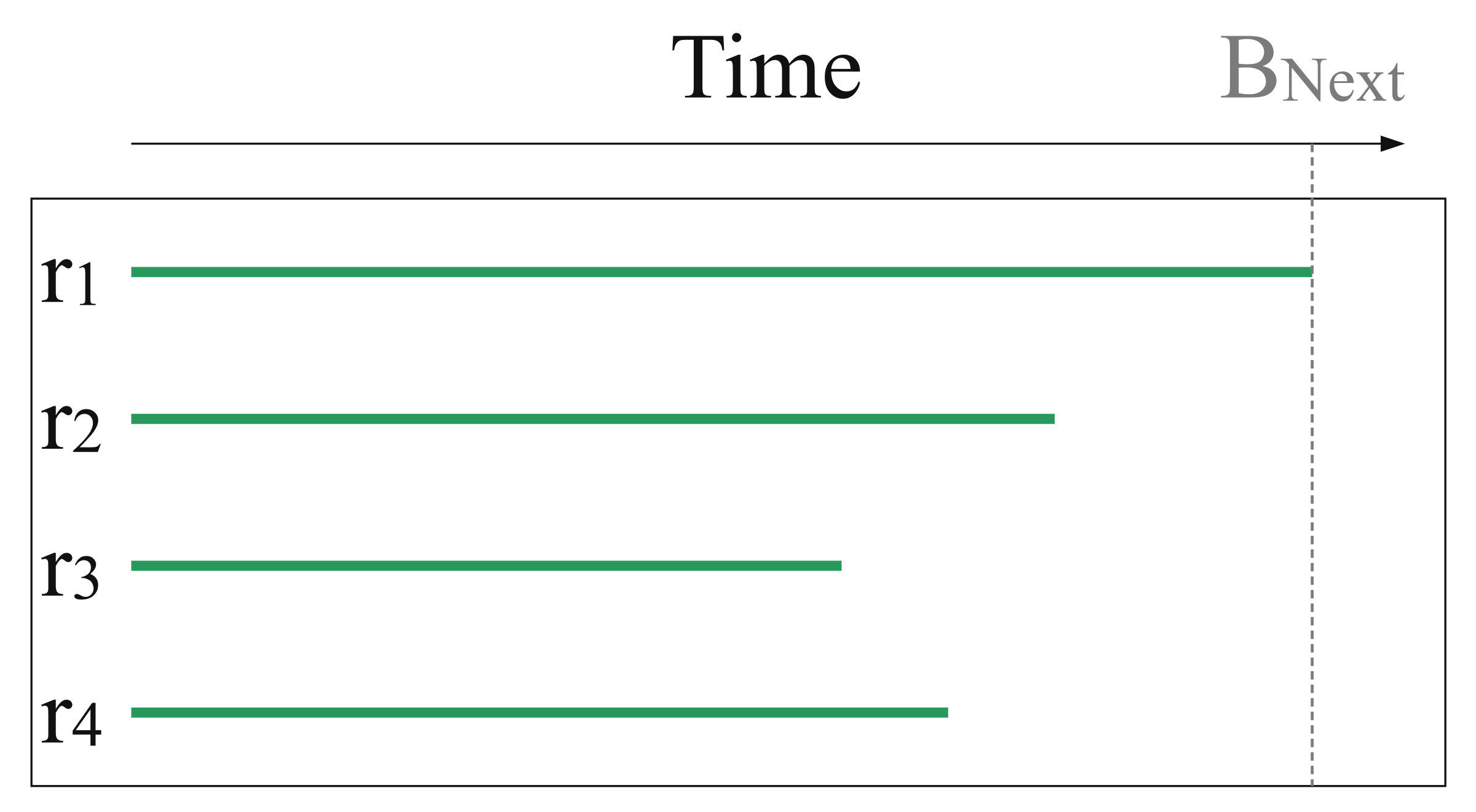}
        \caption{Derive New Bound}
        \label{fig:bounded-arc:new-bound}
    \end{subfigure}
    \begin{subfigure}[t]{0.245\textwidth}
        \centering
        \includegraphics[width=1.0\linewidth]{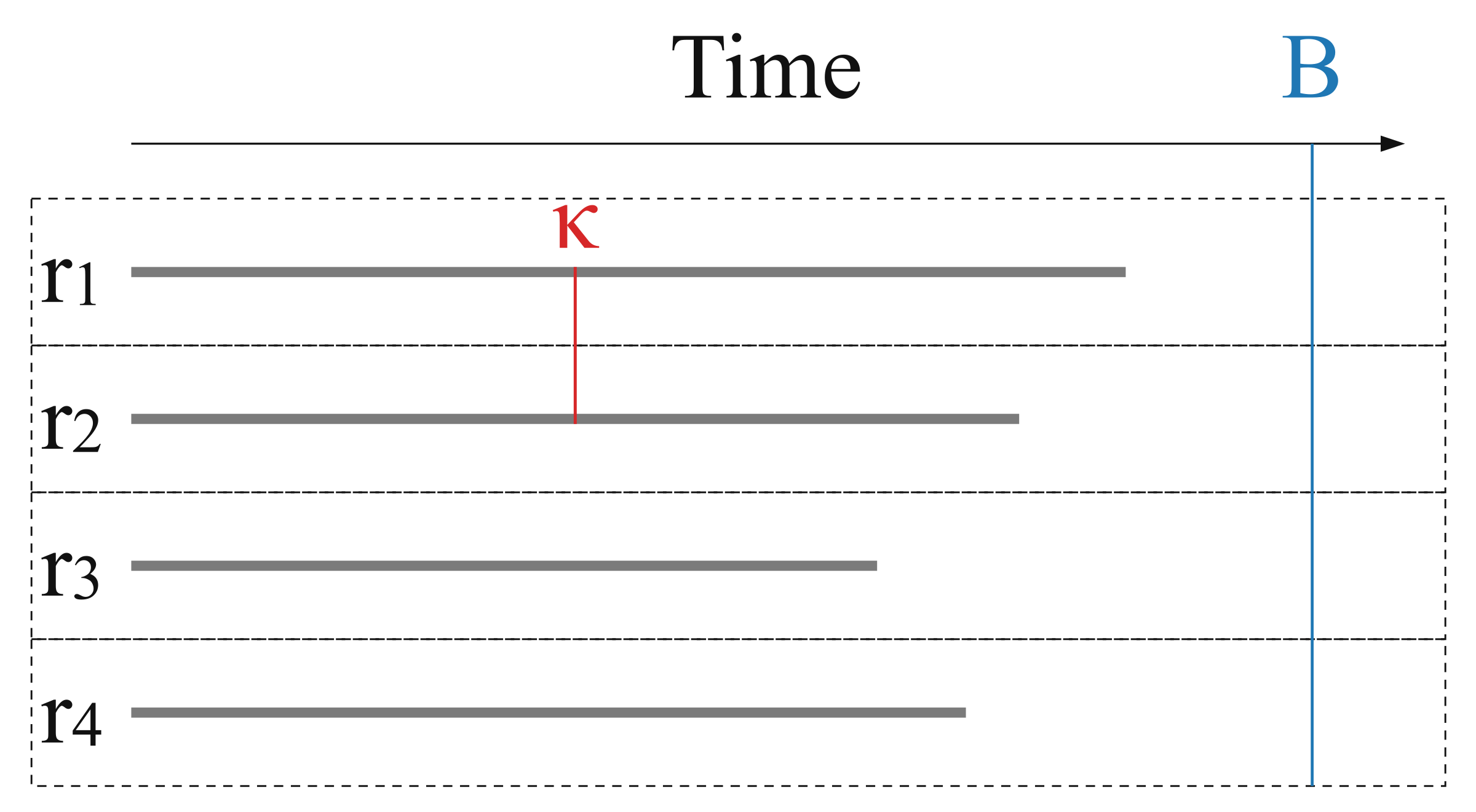}
        \caption{Plan Individual Paths}
        \label{fig:bounded-arc:ind-paths}
    \end{subfigure}
    \begin{subfigure}[t]{0.245\textwidth}
        \centering
        \includegraphics[width=1.0\linewidth]{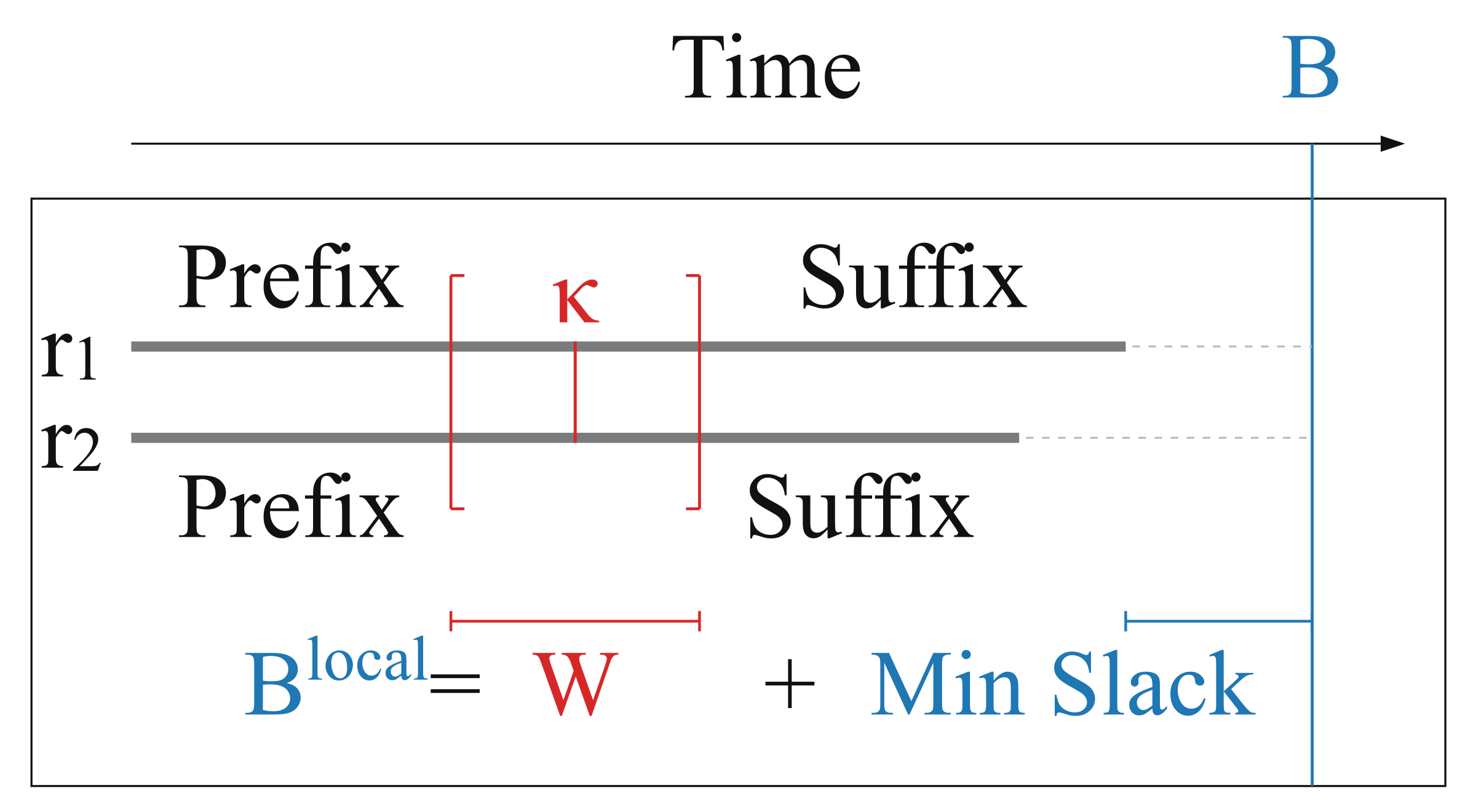}
        \caption{Create Subproblem}
        \label{fig:bounded-arc:subproblem}
    \end{subfigure}
    \begin{subfigure}[t]{0.245\textwidth}
        \centering
        \includegraphics[width=1.0\linewidth]{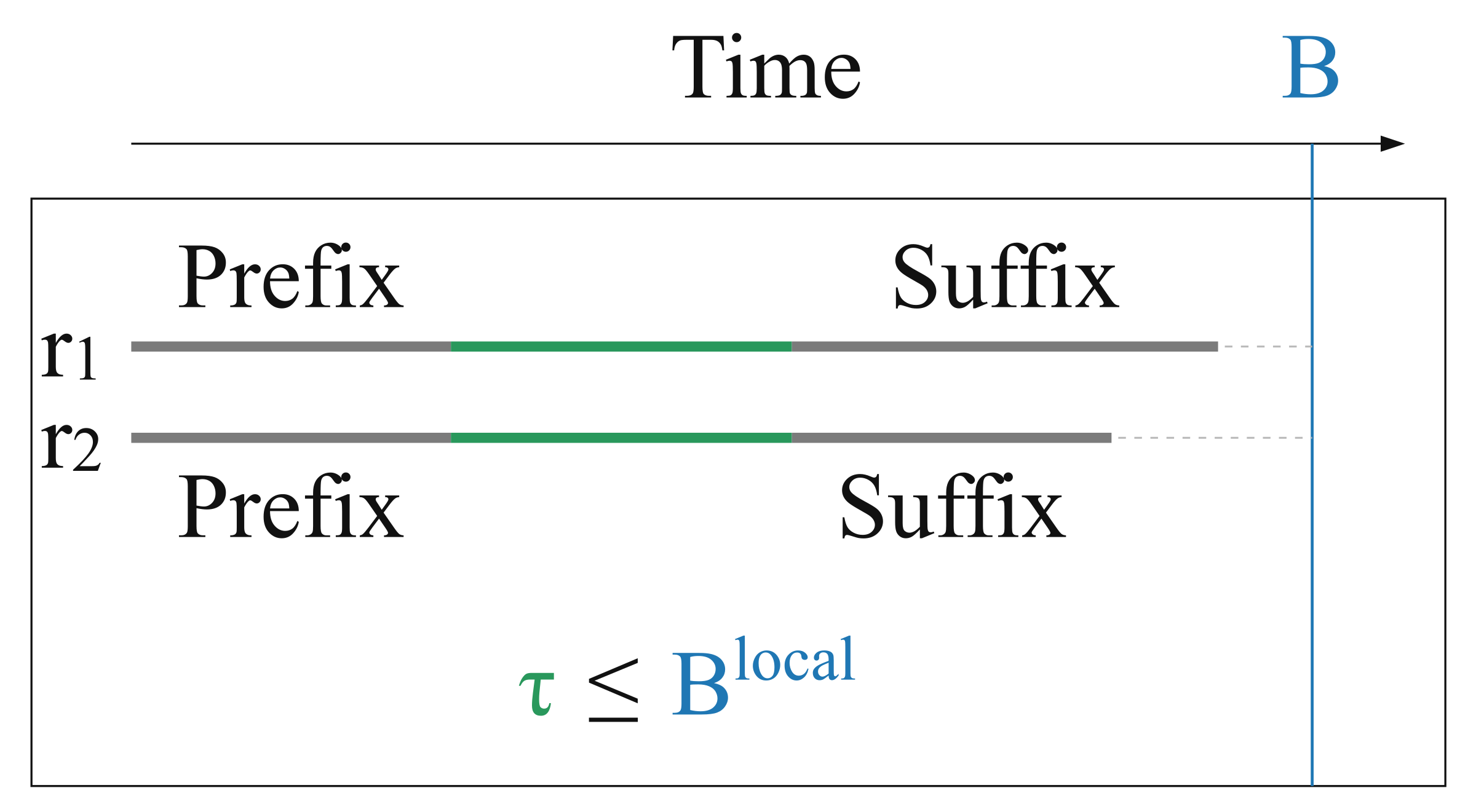}
        \caption{Patch Paths}
        \label{fig:bounded-arc:resolution}
    \end{subfigure}
    \caption{
    \small
    {\barc} overview: 
    (a) The new bound is derived from the makespan of the previous solution.
    (b) Each robot is planned for individually with respect to the new makespan bound. A conflict $\kappa$ (red line) exists between robots 1 and 2.
    (c) A subproblem with a time window $W$ is defined around $\kappa$ creating a prefix and suffix segment on either side for both robots. The local bound $B^{\mathrm{local}}$ is derived from the window size and remaining slack before the team-time bound.
    (d) The repair segment of duration $\tau\le B^{\mathrm{local}}$ is used to update the robot trajectories, keeping their total duration under the team-time bound $B$.
    The process is repeated until no conflicts remain.
    }
    \label{fig:bounded-arc}
    \vspace{-2em}
\end{figure*}

\vspace{-0.7em}
\begin{algorithm}[]
\caption{\textsc{AO-ARC}}
\begin{algorithmic}[1]
\small
\Require {\mrmp} instance $P=\{R,O,s,G, V\}$, Bounded feasibility solver $F$, Initial window size $W$, Fair budget schedule $\beta_1,\beta_2,\beta_3,\ldots$
\Ensure Best team trajectory $\Pi_{\mathrm{best}}$ found

\State $\Pi_{\mathrm{best}} \gets\textsc{ARC}(P,F,W)$\label{alg:aoarc:set-init-traj}
\If{$\Pi_{\mathrm{best}}=\emptyset$} \Return $\emptyset$\EndIf
\State $B\gets J(\Pi_{\mathrm{best}})$\label{alg:aoarc:set-init-bound}

\For{$k = 1,2,3\ldots$ until termination}\label{alg:aoarc:main-loop-start}
    \State $\Pi \gets \textsc{BoundedARC}(P, B, F, W, \beta_k)$\label{alg:aoarc:bounded-arc}
    \If{$\Pi \neq \emptyset$ \textbf{and} $J(\Pi) < B$}\label{alg:aoarc:check-bounded-arc}
        \State $\Pi_{\mathrm{best}} \gets \Pi$\label{alg:aoarc:update-solution}
        \State $B \gets J(\Pi_{\mathrm{best}})$\label{alg:aoarc:update-bound}
    \EndIf
\EndFor\label{alg:aoarc:main-loop-end}

\State \Return $\Pi_{\mathrm{best}}$\label{alg:aoarc:return}
\end{algorithmic}
\label{alg:aoarc}
\end{algorithm}

\vspace{-1.4em}
\subsection{Bounded ARC}
\label{sec:method:bounded-arc}

{\barc} (Alg.~\ref{alg:bounded-arc}) attempts to solve the bounded feasibility problem induced by $(P,B)$.
It first plans independently for each robot using a bounded feasibility solver $F$ with the team-time bound $B$ (lines~\ref{alg:barc:ind-path-start}-\ref{alg:barc:ind-path-end}).
Thus each individual trajectory $\pi_i$ reaches $G_i$ within the current makespan bound.
If any individual bounded query fails within the current budget, the call returns $\emptyset$.
We use the bounded {\rrtc} routine from {\aorrtc}~\cite{wilson2025aorrtc} for $F$; the analysis applies to any bounded feasibility solver satisfying the assumptions in Section~\ref{sec:method:theoretical}.


The independently computed trajectories are then checked for inter-robot conflicts (lines~\ref{alg:barc:find-conflict-outer}, \ref{alg:barc:find-conflict-inner}).
A conflict is $\kappa=\{r_i,r_j,t\}$, where robots $r_i$ and $r_j$ collide at time $t$ along the current team trajectory $\Pi$.
To repair $\kappa$, {\sc SolveSubproblem} (Alg.~\ref{alg:solve-subproblem}) constructs an expansion hierarchy $H$ from $\kappa$ and the initial window size $W$.
Each level $h\in H$ defines a local subproblem $P_h^{\mathrm{local}}$ by selecting a local robot set $R_h^{\mathrm{local}}$, a time interval $I_h=[t_h^-,t_h^+]$ around the conflict, and local $\cspace$ bounds for the robots in $R_h^{\mathrm{local}}$.
Initially, $R_h^{\mathrm{local}}$ contains the conflicting robots and $I_h$ is centered at $t$ with width $W$.
For each $r_i\in R_h^{\mathrm{local}}$, the local start and goal are the current configurations $\pi_i(t_h^-)$ and $\pi_i(t_h^+)$.
The local subproblem requires the robots in $R_h^{\mathrm{local}}$ to move between these local endpoints while avoiding static obstacles and mutual collisions within the local robot set.

If $F$ returns a local trajectory $\Pi_h^{\mathrm{local}}$, then {\sc PatchSolution} replaces the corresponding trajectory segments of the robots in $R_h^{\mathrm{local}}$ with $\Pi_h^{\mathrm{local}}$.
If the repair duration differs from the original window duration, the suffixes of the patched robots are shifted in time while preserving their geometric paths.
Robots outside $R_h^{\mathrm{local}}$ are unchanged.
The patched team trajectory is then checked again for conflicts, and the process repeats until no conflicts remain or the budget is exhausted.

\begin{algorithm}[]
\caption{\textsc{BoundedARC}}
\begin{algorithmic}[1]
\small
\Require {\mrmp} instance $P=\{R,O,s,G,V\}$, Bound $B$, Bounded feasibility solver $F$, Window size $W$, Planning budget $\beta$
\Ensure Team trajectory $\Pi$ such that $J(\Pi)\leq B$
\State $\Pi\gets\emptyset$\label{alg:barc:init}
\For{$r_i\in R$}\Comment{Compute bounded independent paths}\label{alg:barc:ind-path-start}
    \State $\pi_i\gets F(r_i,s_i,G_i,O, B, \beta)$\label{alg:barc:ind-path-solve}\Comment{Call bounded planner}
    \If{$\pi_i==\emptyset$}\label{alg:barc:ind-path-check}
        \State\Return$\emptyset$\label{alg:barc:ind-path-fail}
    \EndIf
    \State$\Pi\gets\Pi\cup\{\pi_i\}$\label{alg:barc:ind-path-add}
\EndFor\label{alg:barc:ind-path-end}
\State $\kappa=\{r_i,r_j,t\}\gets$ {\sc FindFirstConflict($\Pi$)}\label{alg:barc:find-conflict-outer}
\While{$\kappa\neq\emptyset$ and budget $\beta$ remains}\label{alg:barc:conflict-resolution-loop-start}
    \State$(P^{\mathrm{local}},\Pi^{\mathrm{local}})\gets${\sc SolveSub($F,\kappa,P,\Pi,W,B,\beta$)}\label{alg:barc:solve-subproblem}
    \If{$\Pi^{\mathrm{local}}==\emptyset$}
        return $\emptyset$
    \EndIf
    \State $\Pi\gets${\sc PatchSolution($\Pi,\mathcal{P}^{\mathrm{local}},\Pi^{\mathrm{local}}$)}\label{alg:barc:apply-patch}
    \State $\kappa=\{r_i,r_j,t\}\gets$ {\sc FindFirstConflict($\Pi$)}\label{alg:barc:find-conflict-inner}
\EndWhile\label{alg:barc:conflict-resolution-loop-end}
\If{$\kappa\neq\emptyset$}\label{alg:barc:check-success}
    \State\Return$\emptyset$\label{alg:barc:fail}
\EndIf
\State \Return $\Pi$\label{alg:barc:return}
\end{algorithmic}
\label{alg:bounded-arc}
\end{algorithm}

\begin{algorithm}[]
\caption{\textsc{SolveSubproblem (SolveSub)}}
\begin{algorithmic}[1]
\small
\Require Bounded feasibility solver $F$, Conflict $\kappa=\{r_i,r_j,t\}$, {\mrmp} instance $P$, Current team trajectory $\Pi$, Initial window $W$, Bound $B$, Planning budget $\beta$
\Ensure Local subproblem $P^{\mathrm{local}}$, local trajectory $\Pi^{\mathrm{local}}$
\State Initialize expansion hierarchy $H$ from $\kappa$ and $W$
\For{each expansion level $h\in H$}
    \State $P^{\mathrm{local}}_{h}\gets${\sc CreateExpSubprob($\kappa, P, \Pi, h$)}
    \State $B^{\mathrm{local}}_{h}\gets${\sc ComputeLocalBound($P^{\mathrm{local}}_{h}, \Pi, B$)}
    \If{$h\neq h_{\mathrm{final}}$}
        \If{$B^{\mathrm{local}}_{h}\le\epsilon$} continue\EndIf
        \State $B^{\mathrm{local}}_{h}\gets B^{\mathrm{local}}_h-\epsilon$\Comment{Enforce well-behaved property}\label{alg:well-behaved}
    \EndIf
    \State $\Pi^{\mathrm{local}}_{h}\gets F(P^{\mathrm{local}}_{h},B^{\mathrm{local}}_{h},\beta_{h})$
    \If{$\Pi^{\mathrm{local}}_{h}\neq\emptyset$}
        \State\Return $(P^{\mathrm{local}}_{h},\Pi^{\mathrm{local}}_{h})$
    \EndIf
\EndFor
\State\Return$\emptyset$
\end{algorithmic}
\label{alg:solve-subproblem}
\end{algorithm}

\subsubsection{Local Subproblem Bounds}
\label{sec:method:local-bounds}


The key modification in {\aoarc} is that every local repair is given a duration bound that preserves the global makespan bound $B$.
Consider a subproblem $P^{\mathrm{local}}_{h}$ with robot set $R^{\mathrm{local}}$ and repair interval $I_h=[t_h^-,t_h^+]$.
For each robot $r_i\in R^{\mathrm{local}}$, let $\lambda^-_i$ be the duration of its fixed prefix before $t_h^-$, and let $\lambda^+_i$ be the duration of its fixed suffix after $t_h^+$.
If the synchronized repair has duration $\tau$, then the patched duration of robot $r_i$ is $\lambda^-_i+\tau+\lambda^+_i$.
Therefore, to preserve $J(\Pi)\le B$, the local repair must satisfy $\tau\leq B-\lambda^-_i-\lambda^+_i$ for every robot in $R_h^{\mathrm{local}}$.
{\aoarc} therefore uses the local bound $
B^{\mathrm{local}}_h=\min_{r_i\in R_h^{\mathrm{local}}}(B-\lambda_i^--\lambda_i^+).
$

This bound is conservative because it requires a single synchronized repair duration for all robots in $R^{\mathrm{local}}_h$.
It may reject asynchronous repairs that would still satisfy the global makespan bound.
This conservatism does not affect the completeness of {\barc} in the limit because failed bounded repairs cause {\sc SolveSubproblem} to continue through the expansion hierarchy.





\subsubsection{Solving Subproblems}
\label{sec:method:subproblem-expansion}

The local bound ensures that an accepted repair preserves the current global bound, but it does not by itself remove {\arc}'s local decision limitation.
A repair may require {\cspace} regions or a larger time interval than is included in the initial local subproblem.
{\aoarc} handles this by expanding the same subproblem before declaring the bounded repair attempt failed.
As $h$ increases, the local time interval is enlarged and the {\cspace} bounds are relaxed.
If a patched segment later causes a conflict with a robot outside $R_h^{\mathrm{local}}$, the next repair includes the robots that generated the patched segment together with the new conflicting robot.

For nonterminal expansion levels $h\neq h_{\mathrm{final}}$, {\sc SolveSubproblem} uses the reduced bound $B_h^{\mathrm{local}}-\epsilon$.
This forces strict improvement relative to the current bound when a nonterminal local repair is accepted, which is used in the well-behavedness argument in Section~\ref{sec:method:theoretical}.
If $B_h^{\mathrm{local}}\le\epsilon$, the nonterminal query is skipped.
The reduction is not applied at the terminal level $h_{\mathrm{final}}$.

At the terminal level, the time interval spans the full current trajectories of the robots in $R_h^{\mathrm{local}}$, and the local starts and goal coincide with their global starts and goals.
The local {\cspace} bounds are removed, so the local query is the full composite bounded problem for the current local robot set.
In this case $\lambda_i^-=\lambda_i^+=0$ and $B_h^{\mathrm{local}}=B$.
Repeated conflict detection and robot-set expansion can include all robots, at which point the terminal query is the full composite bounded {\mrmp} problem under the current makespan bound $B$.

For the theoretical properties in Section~\ref{sec:method:theoretical}, nonterminal levels receive finite effort so that the expansion can occur, while the terminal level receives unbounded effort under the fair budget schedule.

\subsection{Theoretical Properties}
\label{sec:method:theoretical}

In this section, we show that {\aoarc} is an instance of the {\aox} meta-algorithm~\cite{hauser2016asymptotically} applied to bounded-speed {\mrmp} with the makespan objective as defined in Section~\ref{sec:problem}.
The proof uses the state-cost equivalence established in Section~\ref{sec:problem:state-cost}.
For makespan {\mrmp}, the accumulated cost coordinate is global elapsed team-time with $L\equiv1$ and $\Phi\equiv0$.
Therefore, for any valid team trajectory $\Pi$
\[
C(\Pi)=\int^{T}_{0}1dt=T=J(\Pi).
\]
Thus by Proposition~\ref{prop:bounded-mrmp-and-state-cost-feasibility}, solving the bounded {\mrmp} feasibility problem $P_B$ is equivalent to finding a feasible trajectory in the state-cost space $Z=\compcspace\times\mathbb{R}_{\ge0}$ from $(s,0)$ to the bounded goal set $G_B$.
This is the bounded state-cost feasibility problem used by {\aox}~\cite{hauser2016asymptotically}.


\begin{assumption}[Well-behaved bounded feasibility solver]\label{assumption:solver}
The solver $F$ is a probabilistically complete bounded state-cost feasibility solver
and is well-behaved in the {\aox} sense.
All calls to $F$ use independent randomness.
For any problem $P_B$ with optimum $J_{P_B}^*$, $F$ returns a feasible solution $C\le B$ with probability one in the infinite budget limit, and there exists $w_F\in(0,1]$ such that 
\[\mathbb{E}[C_{P_B}\mid B]-J_{P_B}^*\le (1-w_F)(B-J_{P_B}^*).\]
\end{assumption}

\begin{assumption}[Fair expansion and budget allocation]\label{assumption:fair}
{\sc SolveSubproblem} uses a fair expansion hierarchy.
Nonterminal expansion levels $h\ne h_{\mathrm{final}}$ are queried with the reduced bound $B_h^{\mathrm{local}}-\epsilon$, for fixed $\epsilon>0$, and receive finite effort.
The terminal level $h_{\mathrm{final}}$ uses the unreduced bound $B_h^{\mathrm{local}}$ and receives unbounded effort.
The finite nonterminal attempts give a nonzero probability that a {\barc} call reaches a terminal/global-equivalent bounded query before accepting a nonterminal repair.
Across repeated conflict detection/expansion cycles, if local repairs do not yield a globally valid bounded solution, the policy eventually reaches the full composite query with unbounded effort.
\end{assumption}




\begin{lemma}[Local-bound soundness]\label{lemma:local_bound_soundness}
Let $P^{\mathrm{local}}_{h}$ be a local subproblem involving robots $R^{\mathrm{local}}\subseteq R$.
For each $r_i\in R^{\mathrm{local}}$, let $\lambda_i^-$ and $\lambda_i^+$ be the fixed prefix and suffix durations outside the repair window, and define
\[
    B_h^{\mathrm{local}} =
    \min_{r_i\in R^{\mathrm{local}}}
    \left.(B-\lambda_i^- - \lambda_i^+\right).
\]
If $h\ne h_{\mathrm{final}}$, {\sc SolveSubproblem} calls $F$ with bound $B_h^{\mathrm{local}}-\epsilon.$
If $h=h_{\mathrm{final}}$, it calls $F$ with bound $B_{h_{\mathrm{final}}}^{\mathrm{local}}=B.$
Any returned local repair preserves the global bound $B$.
\end{lemma}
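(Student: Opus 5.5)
The argument is a direct accounting of durations after {\sc PatchSolution}, split by the level $h$ at which the repair is accepted. Suppose $F$ returns a local trajectory $\Pi_h^{\mathrm{local}}$ of synchronized duration $\tau$ for the bound $b_h$ it was given. Here $b_h=B_h^{\mathrm{local}}-\epsilon$ if $h\ne h_{\mathrm{final}}$ and $b_h=B$ if $h=h_{\mathrm{final}}$. By Assumption~\ref{assumption:solver}, any returned solution satisfies $\tau\le b_h$. In the nonterminal case this gives $\tau\le B_h^{\mathrm{local}}-\epsilon<B_h^{\mathrm{local}}$.

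The key step is the patched completion time of each robot $r_i\in R^{\mathrm{local}}$. Its prefix of duration $\lambda_i^-$ is untouched, and the window is replaced by the repair of duration $\tau$. Its suffix is shifted in time while its geometric path, and hence its duration $\lambda_i^+$, is preserved. So its new completion time is $T_i'=\lambda_i^-+\tau+\lambda_i^+$. Because $B_h^{\mathrm{local}}$ is a minimum over $R^{\mathrm{local}}$, we have $\tau\le B_h^{\mathrm{local}}\le B-\lambda_i^--\lambda_i^+$, hence $T_i'\le B$. At the terminal level the window spans the whole current trajectory, so $\lambda_i^-=\lambda_i^+=0$. The local query is then exactly the bounded problem for $R^{\mathrm{local}}$ with bound $B$, giving $T_i'=\tau\le B$ directly.

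Next I handle the robots outside $R^{\mathrm{local}}$. {\sc PatchSolution} leaves them unchanged. Their completion times are bounded by $B$ by induction over the sequence of patches within one {\barc} call. The base case is that every independent trajectory from lines~\ref{alg:barc:ind-path-start}--\ref{alg:barc:ind-path-end} was returned by $F$ with bound $B$. Each patch preserves the invariant $\max_i T_i\le B$, by the previous paragraph for patched robots and trivially for the others. Since $J(\Pi)=\max_i T_i$ by Definition~\ref{def:team-traj-and-makespan}, the patched team trajectory satisfies $J(\Pi)\le B$.

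The main obstacle is justifying that the suffix keeps duration $\lambda_i^+$ after the shift, and that removing trailing waits does not increase any $T_i$. The first follows from the stated behavior of {\sc PatchSolution}. The second holds because trimming only shortens the trajectory. I would also note that the bound is conservative rather than tight: it only asserts $\le B$, and no claim about validity with respect to outside robots is needed, since the lemma concerns the cost bound alone.
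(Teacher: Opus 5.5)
Your proposal is correct and follows the paper's argument: bound $\tau$ by $B_h^{\mathrm{local}}\le B-\lambda_i^{-}-\lambda_i^{+}$, so that each patched robot finishes by $\lambda_i^{-}+\tau+\lambda_i^{+}\le B$, and note that robots outside $R^{\mathrm{local}}$ are unchanged. Beyond this, you make explicit the invariant $\max_i T_i\le B$ across successive patches, with the bounded individual solves as the base case, which the paper's one-line treatment of the outside robots leaves implicit.
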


\begin{proof}
    As $B_h^{\mathrm{local}}-\epsilon<B_h^{\mathrm{local}}$,
    for every $r_i\in R^{\mathrm{local}}$, $\tau\le B_h^{\mathrm{local}}\le B-\lambda^-_i-\lambda_i^+$ for every $h\in H$.
    Thus the repaired duration of robot $r_i$ always satisfies $\lambda_i^-+\tau+\lambda_i^+ \le B$.
    Robots outside of $R^{\mathrm{local}}$ are unchanged by the patch, so the global bound is preserved.
\end{proof}

\begin{lemma}[{\sc \textbf{BoundedARC}} is a bounded feasible planner]\label{lemma:bounded-arc-feasible-planner}
Under Assumptions~\ref{assumption:solver} and~\ref{assumption:fair}, {\barc} is a sound and probabilistically complete planner for $P_B$.
\end{lemma}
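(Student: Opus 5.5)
The proof splits into soundness and probabilistic completeness, handled separately.

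\textbf{Soundness.} Show that any non-empty $\Pi$ returned by Alg.~\ref{alg:bounded-arc} is a valid team trajectory with $J(\Pi)\le B$. Use induction on the number of patches applied in the conflict-resolution loop. The invariant is that every robot trajectory reaches its goal by time $B$ and avoids static obstacles.

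The base case follows from the individual calls $F(r_i,s_i,G_i,O,B,\beta)$. By Assumption~\ref{assumption:solver} each returns a bounded-feasible path, so $T_i\le B$ for every robot, and hence $J(\Pi)=\max_i T_i\le B$ by Definition~\ref{def:team-traj-and-makespan}.

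For the inductive step, {\sc PatchSolution} splices a local repair into the trajectory. Lemma~\ref{lemma:local_bound_soundness} guarantees that $\lambda_i^-+\tau+\lambda_i^+\le B$ for every patched robot, and robots outside $R^{\mathrm{local}}$ are unchanged. Time-shifting suffixes preserves their geometry, so obstacle validity is kept. The patch is continuous because the local endpoints are $\pi_i(t_h^-)$ and $\pi_i(t_h^+)$.

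Finally, Alg.~\ref{alg:bounded-arc} returns $\Pi\neq\emptyset$ only when {\sc FindFirstConflict} reports no inter-robot conflict, at the resolution $\rho$. So the returned $\Pi$ lies in $\compfree$, and by Proposition~\ref{prop:bounded-mrmp-and-state-cost-feasibility} it reaches $G_B$.

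\textbf{Completeness.} Fix a $P_B$ that admits a solution; then each single-robot projection $P_B$ for $r_i$ is also feasible. The argument has three steps.

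\emph{Step one.} Under the fair budget schedule the budgets $\beta_k\to\infty$. By probabilistic completeness of $F$ with independent randomness, every individual call succeeds with probability tending to one.

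\emph{Step two.} Use Assumption~\ref{assumption:fair} on the conflict loop. Each nonterminal expansion level receives finite effort, so the loop either (a) ends in a conflict-free $\Pi$, which is sound by the first part, or (b) proceeds through nonterminal levels and robot-set growth. The robot set only grows and is bounded by $n$, and the hierarchy $H$ at each robot set is finite. Hence after finitely many expansion cycles the policy reaches the terminal level with $R^{\mathrm{local}}=R$. That query is exactly the full composite $P_B$ with bound $B^{\mathrm{local}}=B$, since $\lambda^\pm_i=0$.

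\emph{Step three.} That terminal query is feasible by hypothesis and receives unbounded effort. By Assumption~\ref{assumption:solver}, $F$ then returns a solution with probability one as budget grows. Its patch replaces all robots' full trajectories, leaving no conflicts, so Alg.~\ref{alg:bounded-arc} returns it. Combined with Step one, the probability of success tends to one.

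\textbf{Main obstacle.} The hardest part is termination of the repair/expansion process. Patches can create new conflicts, so the loop could in principle cycle among nonterminal repairs indefinitely. The plan is to avoid proving any monotone progress measure. Instead, lean directly on the last clause of Assumption~\ref{assumption:fair}: if local repairs do not yield a valid bounded solution, the policy eventually reaches the full composite query. Add the observation that each nonterminal attempt has finite cost, so reaching that query consumes finite budget, which the growing $\beta_k$ eventually covers.

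A second subtlety is that a nonterminal level may be infeasible, for example because of the $\epsilon$ reduction. Its failure does not break completeness, because failure only advances $h$ to the next level of the hierarchy.
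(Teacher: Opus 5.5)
Your proposal is correct and follows the paper's proof. Soundness comes from the final {\sc FindFirstConflict} check plus Lemma~\ref{lemma:local_bound_soundness}, and completeness comes from Assumption~\ref{assumption:fair} forcing the full composite bounded query with unbounded effort, which Assumption~\ref{assumption:solver} then solves; your explicit treatment of the initial individual solves, and your observation that a full-team terminal patch leaves no conflicts, fill in details the paper leaves implicit. One caveat: drop the Step-two claim that the robot set only grows, since the algorithm does not guarantee it ({\sc SolveSubproblem} rebuilds $H$ from $\kappa$ and $W$ on every call, so nonterminal repairs can recur), and rest termination solely on the last clause of Assumption~\ref{assumption:fair}, as your ``Main obstacle'' paragraph and the paper both do.
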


\begin{proof}
Soundness follows because {\barc} only returns after {\sc FindFirstConflict} reports no conflicts, and every accepted path preserves $J(\Pi)\le B$ by Lemma~\ref{lemma:local_bound_soundness}.
Therefore any returned trajectory is valid and solves $P_B$.


For probabilistic completeness, suppose $B>J^*$.
Then $P_B$ is feasible.
The $\epsilon$-reduction is applied only to $h\ne h_{\mathrm{final}}$ subproblem levels, and $h_{\mathrm{final}}$ retains the global bound $B$.
If lower levels do not produce a valid team trajectory, Assumption~\ref{assumption:fair} ensures that the subproblem hierarchy recovers the full composite bounded problem with unbounded effort.
At that level, the query is exactly the bounded state-cost feasibility problem equivalent to $P_B$.
By Assumption~\ref{assumption:solver}, this feasible query is solved with probability approaching one.
\end{proof}



\begin{lemma}[{\sc BoundedARC} is well-behaved]\label{lemma:well-behaved}
Let $B_0<\infty$ be the first finite incumbent found by {\aoarc}.
For every later call with $J^*<B\le B_0$, {\barc} satisfies the {\aox} contraction condition.
\end{lemma}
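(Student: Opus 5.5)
We need to show $\mathbb{E}[J(\Pi)\mid B]-J^*\le(1-w)(B-J^*)$ for some $w\in(0,1]$ independent of $B$, for every $B\in(J^*,B_0]$. The plan is to condition on which branch of {\barc} produces the returned trajectory and bound each branch separately. The measurable event is $\mathcal{E}$: the call reaches the terminal, global-equivalent composite bounded query before accepting any nonterminal repair. By Assumption~\ref{assumption:fair}, $\Pr[\mathcal{E}]\ge p>0$. I will argue that $p$ can be taken uniform over $B\in(J^*,B_0]$, since the nonterminal attempts receive finite effort and the hierarchy depends on $B$ only through the local bounds. On $\mathcal{E}^c$, some nonterminal repair with reduced bound $B_h^{\mathrm{local}}-\epsilon$ is accepted.

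\textbf{Branch $\mathcal{E}$.} Here the query is exactly the bounded state-cost problem equivalent to $P_B$ (Proposition~\ref{prop:bounded-mrmp-and-state-cost-feasibility}), and it has optimum $J^*$. Assumption~\ref{assumption:solver} then gives $\mathbb{E}[C\mid B,\mathcal{E}]-J^*\le(1-w_F)(B-J^*)$.

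\textbf{Branch $\mathcal{E}^c$.} By Lemma~\ref{lemma:local_bound_soundness}, every returned trajectory satisfies $J(\Pi)\le B$, so the excess is at most $B-J^*$. Strict improvement is where the $\epsilon$-reduction matters. The accepted nonterminal repair has duration $\tau\le B^{\mathrm{local}}_h-\epsilon$. For the robot $r_i$ attaining the minimum in $B_h^{\mathrm{local}}$, this yields patched duration at most $B-\epsilon$. I would then try to show that the makespan-critical robot is affected, giving $J(\Pi)\le B-\epsilon$, which converts to a contraction factor $\epsilon/(B_0-J^*)$ on $(B-J^*)$ since $B\le B_0$. If that fails because the critical robot lies outside $R^{\mathrm{local}}$, I would fall back to the weak bound $\mathbb{E}[C\mid B,\mathcal{E}^c]-J^*\le B-J^*$ and let branch $\mathcal{E}$ carry the contraction.

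\textbf{Combining.} By total expectation,
\[
\mathbb{E}[C\mid B]-J^*\le p(1-w_F)(B-J^*)+(1-p)(B-J^*)=(1-pw_F)(B-J^*),
\]
so $w=pw_F\in(0,1]$ works. If the calls fail to return a solution, the incumbent is unchanged, which is consistent with the {\aox} convention of treating the cost as $B$. Lemma~\ref{lemma:bounded-arc-feasible-planner} ensures that a solution is eventually returned under the fair budget schedule.

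\textbf{Main obstacle.} The hardest step is justifying that $p$ is bounded below uniformly in $B$ on $(J^*,B_0]$. The restriction to $B\le B_0$ is what makes this plausible: the set of possible subproblem instances is controlled by the bounded horizon, and the per-level finite budgets give a failure probability bounded away from one. I would try to extract this directly from Assumption~\ref{assumption:fair}, reading its ``nonzero probability'' clause as holding uniformly over this compact range of bounds.
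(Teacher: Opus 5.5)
\textbf{There is a genuine gap.} Your argument discards all contraction except what comes from the single all-robot composite query. But the event $\mathcal{E}$ on which that query is reached need not have probability bounded below by any $p>0$; it can have probability zero. If the bounded individual paths never conflict (e.g., robots working in disjoint regions), {\sc SolveSubproblem} is never called. If conflicts only ever couple a proper subset of robots, robot-set expansion never forms the global query. In both cases your bound degenerates to $w=pw_F=0$, even though {\barc} clearly contracts there. Your dichotomy is also off: $\mathcal{E}^c$ is not ``some nonterminal repair is accepted'', since it also contains the no-repair case and terminal repairs on proper subsets. If you instead let $\mathcal{E}$ mean ``some terminal-level query is reached'', your Branch $\mathcal{E}$ fails. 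A terminal repair on $R^{\mathrm{local}}\subsetneq R$ controls only the durations of the robots in $R^{\mathrm{local}}$, so the team makespan is not the $C$ of that one call. Finally, even where $\Pr[\mathcal{E}]>0$, uniformity in $B$ would be an extra hypothesis that Assumption~\ref{assumption:fair} does not supply.

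The missing idea is to certify slack robot by robot rather than for the whole team at once. This is what the paper does. For each $r_i$, look at the last event that modified its trajectory.

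If that event was an individual bounded solve or a terminal repair, project an optimal team trajectory onto the robots of that query. This shows the query's optimum satisfies $J_F^*\le J^*$. Assumption~\ref{assumption:solver} then gives $u=\frac{B-C}{B-J_F^*}$ with expectation at least $w_F$, and $B-T_i\ge B-C\ge u(B-J^*)$.

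If that event was a nonterminal repair, the $\epsilon$-reduction gives $T_i\le B-\epsilon$ for \emph{every} robot in $R^{\mathrm{local}}$, not just the minimizer, because $\tau\le B_h^{\mathrm{local}}-\epsilon\le B-\lambda_i^- - \lambda_i^+ -\epsilon$ for each $i$. Since $B\le B_0$, this becomes the deterministic certificate $u_i=\min\{1,\epsilon/(B_0-J^*)\}$.

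You never need the makespan-critical robot to be the repaired one. Since $J(\Pi)=\max_i T_i$, you get $B-J(\Pi)=\min_i(B-T_i)\ge(B-J^*)\min_i u_i$. With fresh randomness across calls, $\mathbb{E}[\min_i u_i]\ge\mathbb{E}[\prod_i u_i]\ge w_c^n$ with $w_c=\min\{w_F,w_\epsilon\}$. This route uses neither the probability of reaching the global query nor any uniformity argument. The $\epsilon$-reduction is exactly what makes the branch you tried to discard contract.
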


\begin{proof}
Let $T_i$ be the final duration of robot $r_i$in $\Pi$.
For each robot, inspect the last event that modified its trajectory.
This event gives a slack certificate $u_i\in[0,1]$ such that $B-T_i\ge u_i(B-J^*)$ with conditional expectation at least a uniform constant $w_c>0$.

If the last event is an initial individual solve or a terminal repair, let $C$ be the makespan returned by that call to $F$, and let $J_F^*$ be the optimum of the corresponding single robot or terminal local problem.
Projecting a globally optimal team trajectory onto the robot or robots in that query gives a feasible solution for the query, so $J_F^*\le J^*$.
By Assumption~\ref{assumption:solver}, the call to $F$ has a slack certificate $u=\frac{B-C}{B-J_F^*}$ with conditional expectation at least $w_F$.
For an initial individual solve, $T_i=C$.
For a terminal repair, every involved robot has $T_i\le C$.
Therefore, in either case, $B-T_i\ge u(B-J^*)$.

If the last event is a nonterminal repair, the local query uses the reduced bound $B_h^{local}-\epsilon$.
Since $B_h^{local}\le B-\lambda_i^- - \lambda_i^+$, the patched total duration of $r_i$ satisfies $T_i\le B-\epsilon$.
For all calls after the first finite incumbent, $B\le B_0$,
so this is a deterministic contraction certificate with $u_i=w_\epsilon=\min(1,\frac{\epsilon}{B_0-J^*})>0$.
If $B_0=J^*$, convergence has already been reached.

Thus every robot has a slack certificate with conditional expectation at least $w_c=\min\{w_F,w_\epsilon\}$.
There are at most $n$ distinct certificates. 
Since disjoint calls to $F$ use fresh randomness, the standard product-slack argument gives $\mathbb{E}[\min_i u_i]\ge w_c^n$: namely, $\min_i u_i\ge\prod_i u_i$, and each distinct factor has conditional expectation at least $w_c$.

Finally, because the objective is makespan, $J(\Pi)=\max_i T_i$.
Hence the slack of the team solution is the minimum robot slack: $B-J(\Pi)=\min_i(B-T_i)\ge (B-J^*)\min_i u_i$.
Taking expectation gives 
\[
\mathbb{E}[J(\Pi)\mid B]-J^*\le (1-w_c^n)(B-J^*).
\]
Therefore, {\barc} satisfies the {\aox} contraction condition.
\end{proof}

\begin{theorem}[Almost-sure asymptotic optimality of {\sc \textbf{AO-ARC}}]\label{theorem:aoarc-ao}
Under Assumptions~\ref{assumption:solver} and \ref{assumption:fair}, {\aoarc} is almost-surely asymptotically optimal.
If $B_k$ is the incumbent makespan after the $k$th successful improvement, then 
\[
    \Pr\!\left(\lim_{k\to\infty} B_k = J^*\right)=1.
\]
\end{theorem}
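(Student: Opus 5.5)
The plan is to reduce Theorem~\ref{theorem:aoarc-ao} to the {\aox} convergence theorem of~\cite{hauser2016asymptotically}. That result needs two properties of the inner solver: completeness on feasible bounded queries, and expected contraction of suboptimality. Both have already been established for {\barc}, in Lemma~\ref{lemma:bounded-arc-feasible-planner} and Lemma~\ref{lemma:well-behaved}. I will still make the supermartingale-style argument explicit, so the theorem does not rest only on citation.

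\textbf{Step 1 (finite incumbent and monotonicity).} Line~\ref{alg:aoarc:set-init-traj} uses the probabilistic completeness of {\arc}, which follows from Assumption~\ref{assumption:solver}. Hence with probability one a finite $B_0=J(\Pi_{\mathrm{best}})$ exists. If $B_0=J^*$, we are done. By lines~\ref{alg:aoarc:check-bounded-arc}--\ref{alg:aoarc:update-bound}, the sequence $B_k$ is nonincreasing and bounded below by $J^*$. So $B_\infty=\lim_k B_k$ exists almost surely, and it only remains to show $B_\infty=J^*$ a.s.

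\textbf{Step 2 (infinitely many improvements while $B_k>J^*$).} Fix an iteration with incumbent $B>J^*$. Then $P_B$ is feasible. Lemma~\ref{lemma:bounded-arc-feasible-planner}, together with the unbounded fair budget schedule $\beta_k$ of Assumption~\ref{assumption:fair}, implies that some later call returns a solution with probability one. Lemma~\ref{lemma:well-behaved} gives $\mathbb{E}[J(\Pi)\mid B]-J^*\le(1-w)(B-J^*)$ with $w=w_c^n>0$. Since $J(\Pi)\ge J^*$, Markov's inequality shows that with probability at least some $p>0$ (depending only on $w$), the returned $J(\Pi)$ is strictly below $B$. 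For instance, the event $J(\Pi)-J^*\le(1-w/2)(B-J^*)$ has probability at least $w/2$. So the update in line~\ref{alg:aoarc:update-bound} fires. Because calls use independent randomness, a Borel--Cantelli argument shows that accepted improvements keep occurring almost surely as long as $B>J^*$.

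\textbf{Step 3 (contraction of the gap).} Set $D_k=B_k-J^*$. The contraction from Lemma~\ref{lemma:well-behaved} holds uniformly for all $B\le B_0$, since $w$ does not depend on $k$. Conditioning on the filtration of past calls then gives $\mathbb{E}[D_{k+1}\mid D_k]\le(1-w)D_k$ along accepted improvements. Rejected returns leave $D$ unchanged and only help. Iterating gives $\mathbb{E}[D_k]\le(1-w)^k D_0\to0$. Since $D_k\ge0$ is monotone, $\mathbb{E}[D_\infty]\le\mathbb{E}[D_k]$ for all $k$, so $\mathbb{E}[D_\infty]=0$ and $D_\infty=0$ almost surely. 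This gives $\Pr(\lim_k B_k=J^*)=1$.

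The main obstacle is Step 3's conditioning. The contraction must be indexed by the $k$th \emph{successful} improvement, but the update rule only accepts strict improvements. I would handle this by working with the post-call value $\min(B,J(\Pi))$, which obeys the same expectation bound. Step 2 then guarantees that successes occur infinitely often, so indexing by successes is well defined almost surely. The case where $B_k=J^*$ is reached in finitely many steps is treated separately as trivial convergence.
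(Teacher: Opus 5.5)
Your proposal is correct and follows the paper's route. You take the finite incumbent $B_0$ from the initial {\arc} call, use Lemma~\ref{lemma:bounded-arc-feasible-planner} for soundness and probabilistic completeness and Lemma~\ref{lemma:well-behaved} for the uniform contraction with $w=w_c^n$, and conclude via {\aox} convergence. The only difference is that you prove that convergence step yourself where the paper cites the {\aox} theorem: a Markov bound gives a strict improvement with probability at least $w/2$, then $\mathbb{E}[D_k]\le(1-w)^kD_0$ together with monotonicity of $D_k$ finishes it, and this is a valid self-contained substitute for the citation.
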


\begin{proof}
By Proposition~\ref{prop:bounded-mrmp-and-state-cost-feasibility}, bounded makespan {\mrmp} is equivalent to bounded feasible planning in state-cost space.
By Lemma~\ref{lemma:bounded-arc-feasible-planner}, {\barc} is a sound probabilistically complete feasible planner for these bounded problems.
The initial call to {\arc} is used to obtain the
finite incumbent $B_0$.
By Lemma~\ref{lemma:well-behaved}, {\barc} satisfies the {\aox} well-behaved contraction condition~\cite{hauser2016asymptotically} for all subsequent finite bound calls with $J$ replacing the generic cost function.
Alg.~\ref{alg:aoarc} is therefore {\aox} with $A=$ {\barc} and cost $J$.
The {\aox} convergence theorem~\cite{hauser2016asymptotically} implies that the incumbent cost converges almost surely to the optimal cost.
Hence $B_k\to J^*$ almost surely. 
\end{proof}

\section{Evaluation}
\label{sec:evaluation}

\begin{figure*}[t]
    \centering
    \hfill
    \begin{subfigure}[t]{0.15\textwidth}
        \centering
        \includegraphics[width=\linewidth]{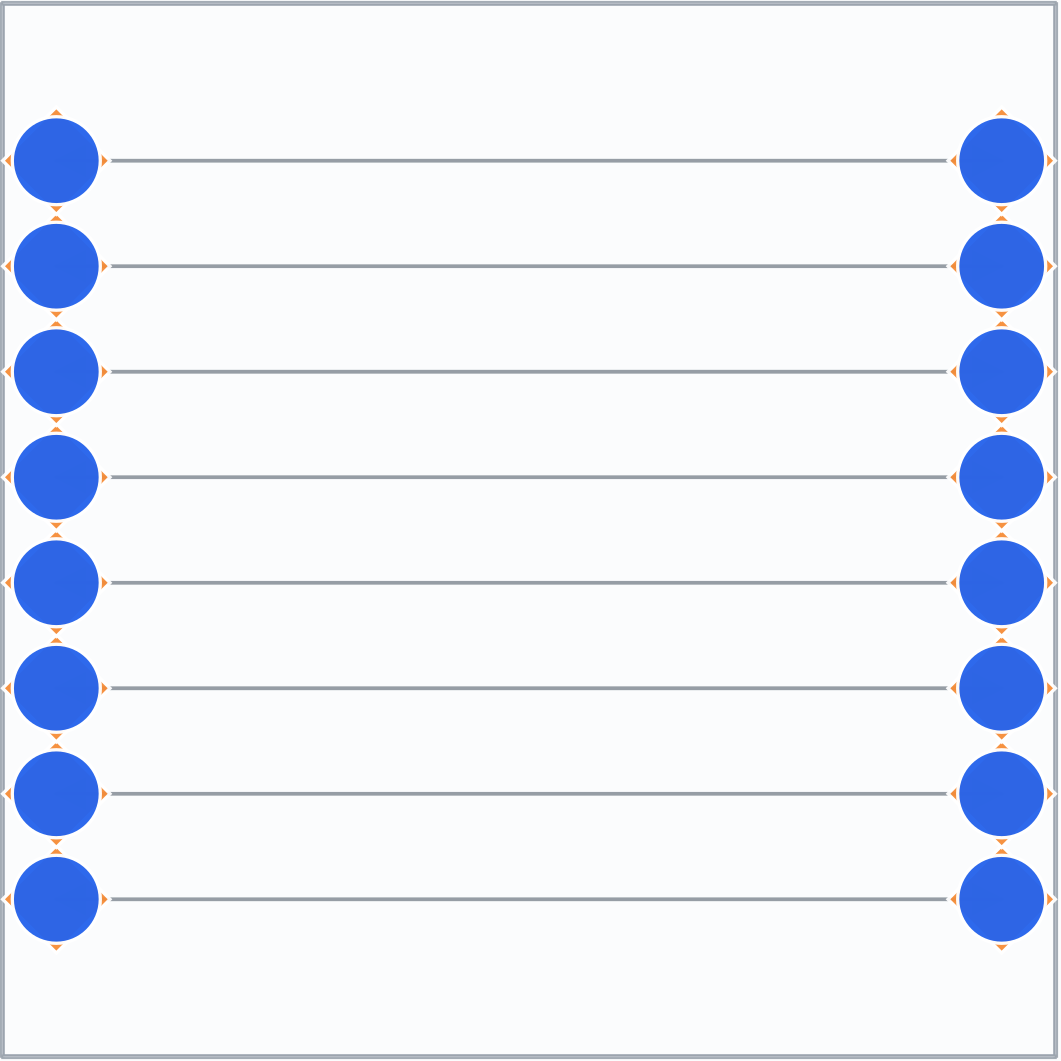}
        \caption{2D Mobile Cross}
        \label{fig:scenarios-2d-cross}
    \end{subfigure}
    \hfill
    \begin{subfigure}[t]{0.15\textwidth}
        \centering
        \includegraphics[width=1.0\linewidth]{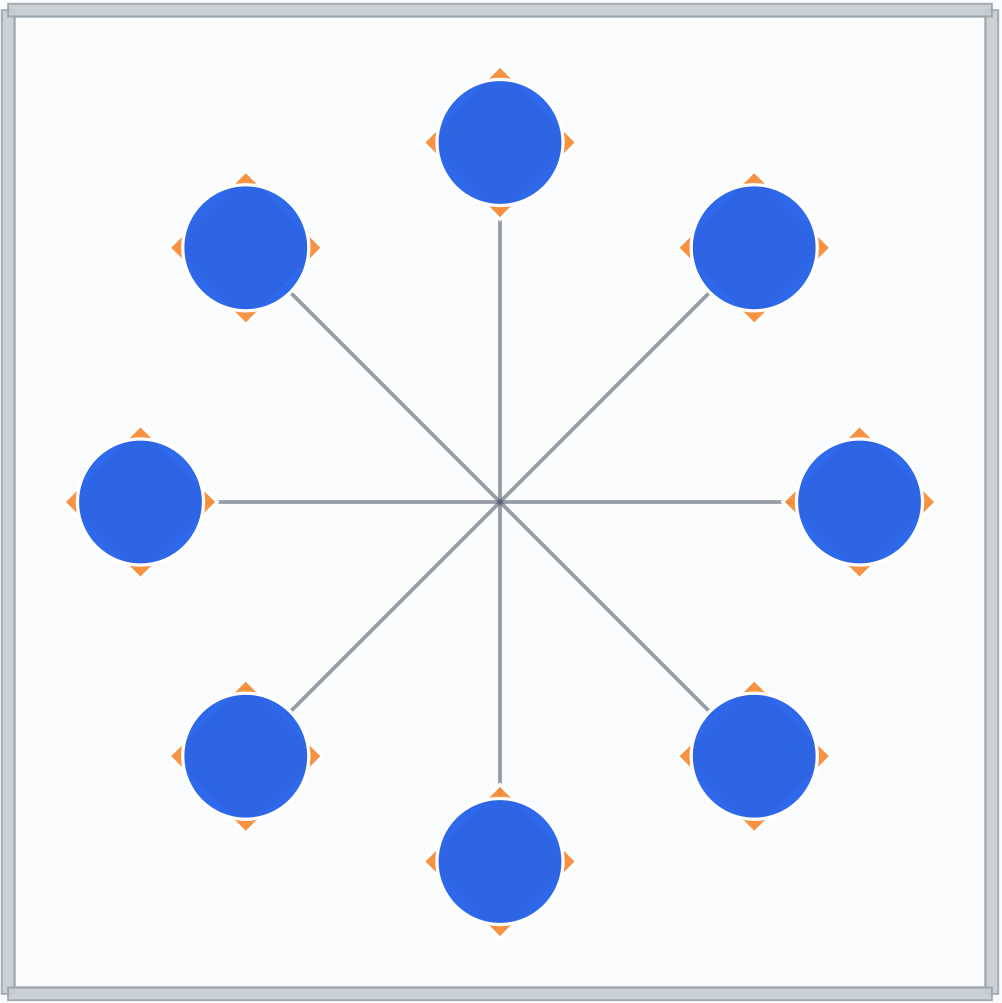}
        \caption{2D Mobile Circle}
        \label{fig:scenarios-2d-circle}
    \end{subfigure}
    \hfill
    \begin{subfigure}[t]{0.3\textwidth}
        \centering
        \includegraphics[width=1.0\linewidth]{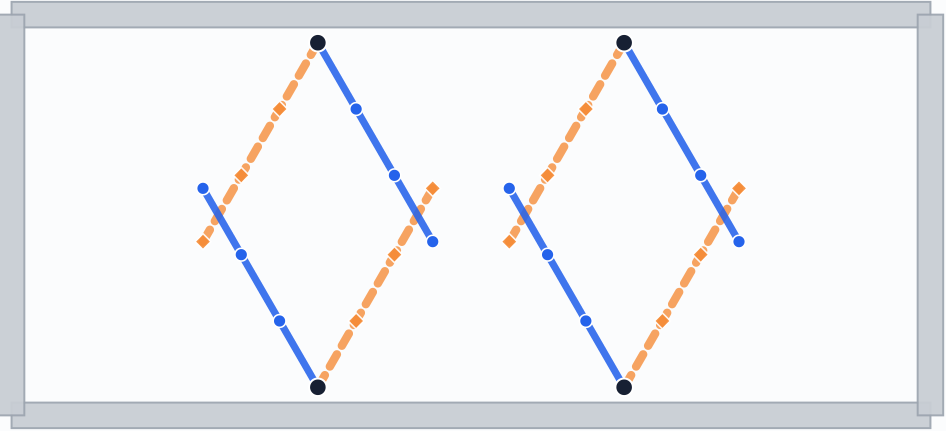}
        \caption{Planar Manipulators}
        \label{fig:scenarios-planar-cross}
    \end{subfigure}
    \hfill
    \begin{subfigure}[t]{0.15\textwidth}
        \centering
        \includegraphics[width=1.0\linewidth]{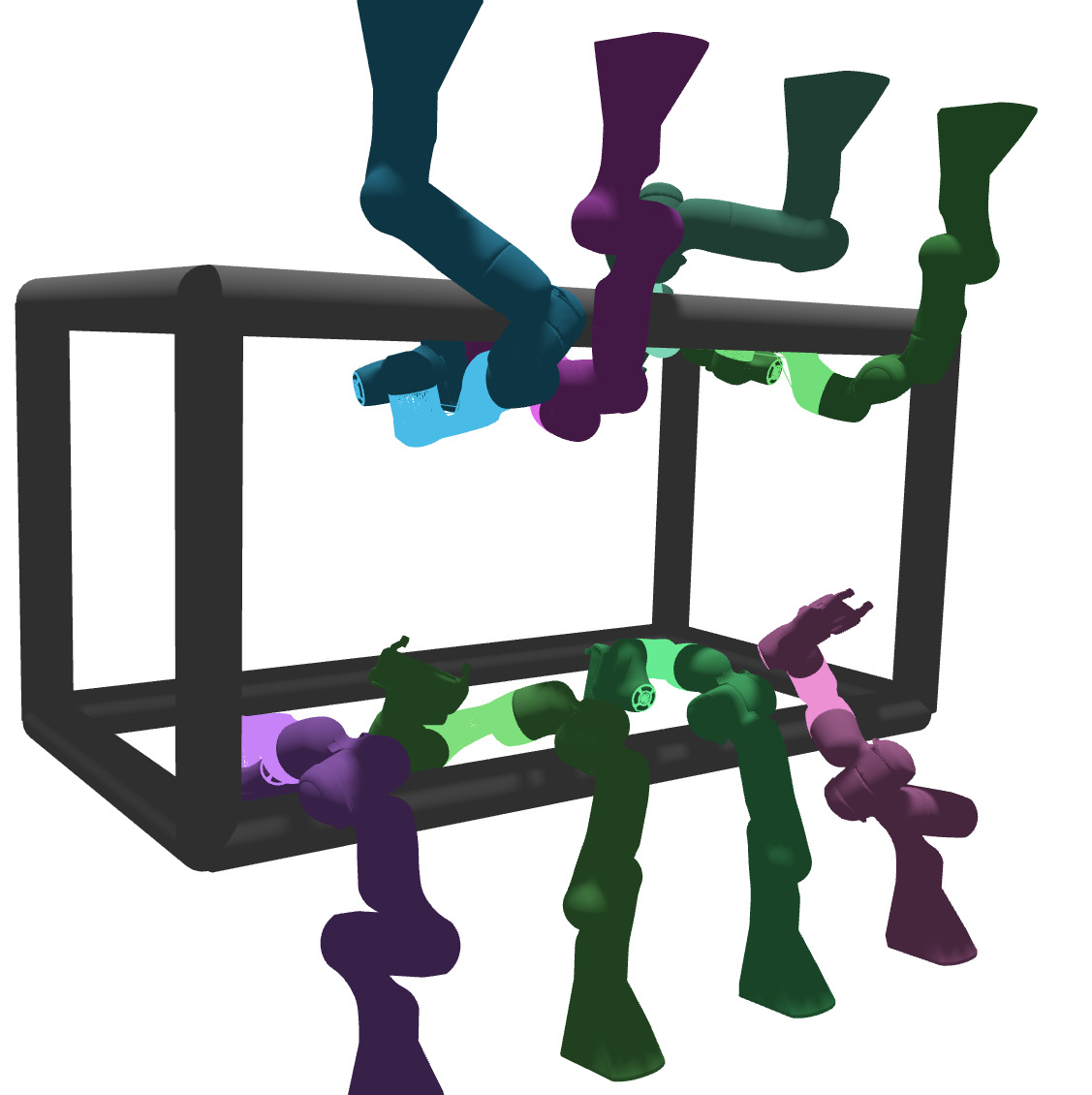}
        \caption{3D Manipulator Cage}
        \label{fig:panda}
    \end{subfigure}
    \caption{
    \small
    (a-b) 2D mobile robot scenarios with robots swapping places with their opposite counterparts. (a) Induces $\frac{n}{2}$ conflicts with two robots each. (b) Forces {\arc} variants to grow one large subproblem that includes all robots.
    (c) Rows of planar manipulators must move from the blue configurations to the orange ones, forcing conflicts that ripple to other robots in the constrained environment.
    (d) 3D Panda manipulators in a floor-ceiling mounting configuration and operating in a shared workspace (similar to automobile assembly lines).
    }
    \label{fig:scenarios}
    \vspace{-0.5em}
\end{figure*}

\begin{figure*}[t]
    \centering
    \begin{subfigure}[t]{0.45\textwidth}
        \centering
        \includegraphics[width=\linewidth]{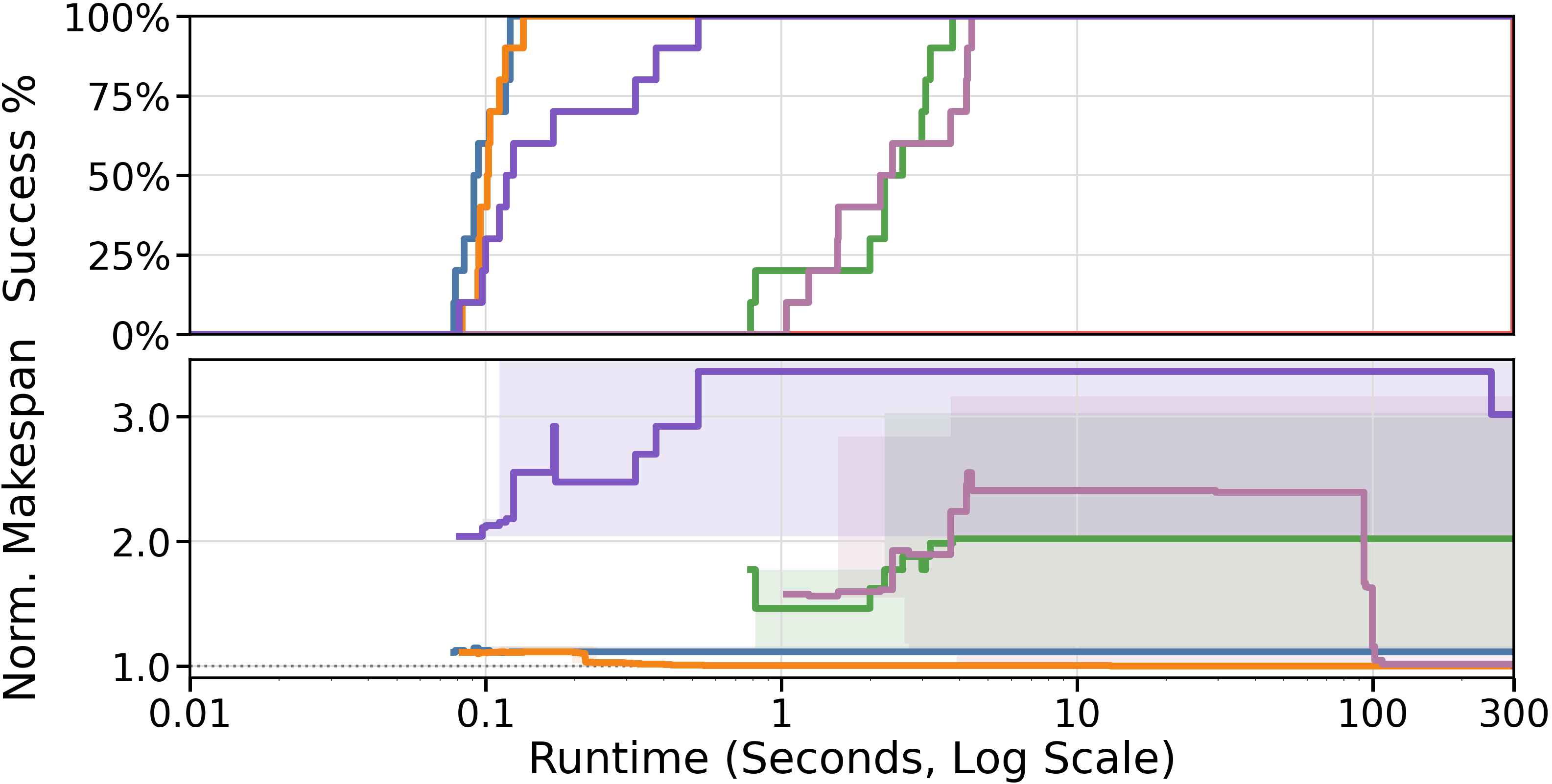}
        \caption{Cross - 8}
        \label{fig:results:parallel}
    \end{subfigure}
    \hfill
    \begin{subfigure}[t]{0.45\textwidth}
        \centering
        \includegraphics[width=\linewidth]{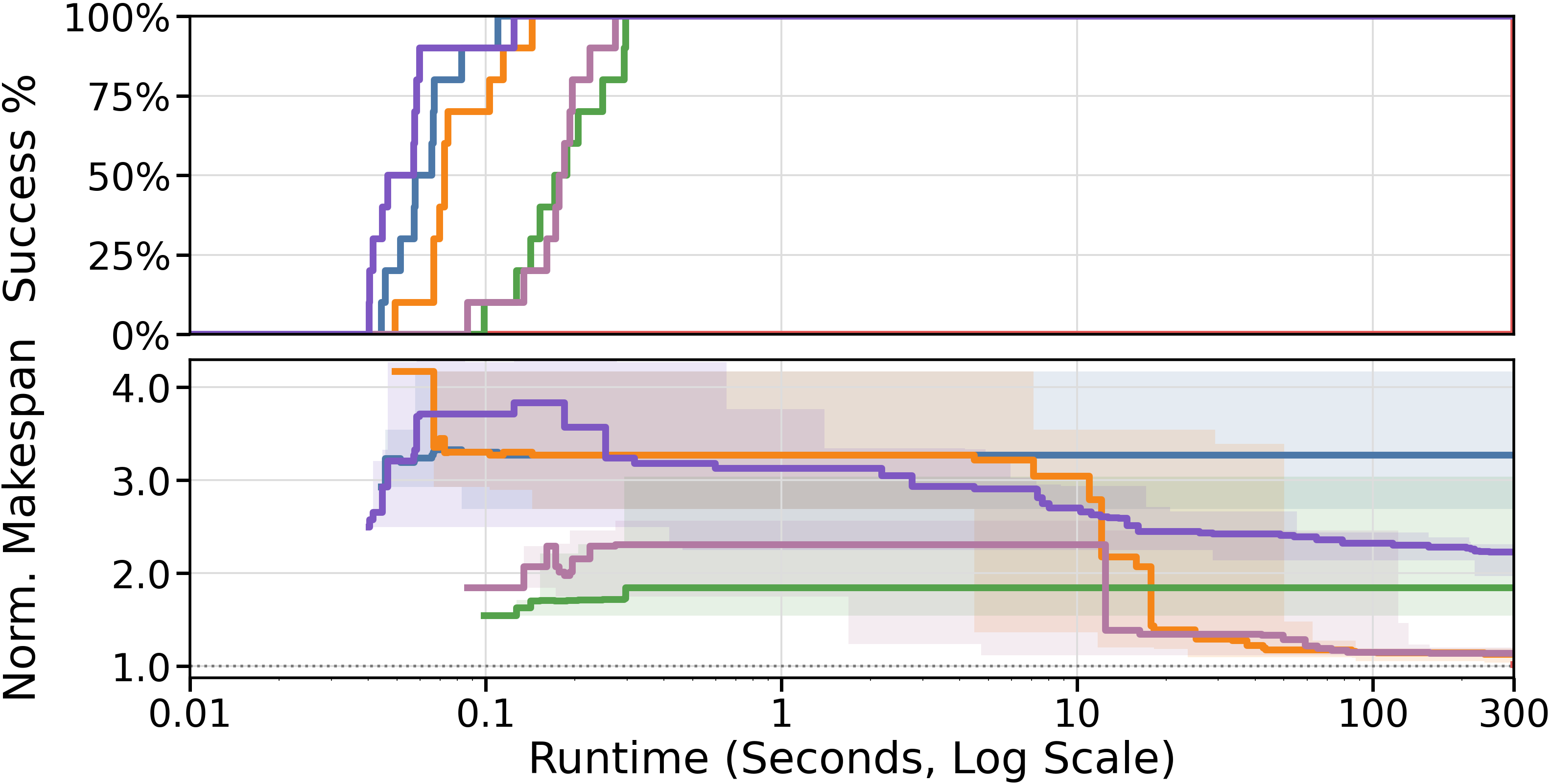}
        \caption{Circle - 8 Robots}
        \label{fig:results:circle}
    \end{subfigure}
    \\
    \begin{subfigure}[t]{0.45\textwidth}
        \centering
        \includegraphics[width=\linewidth]{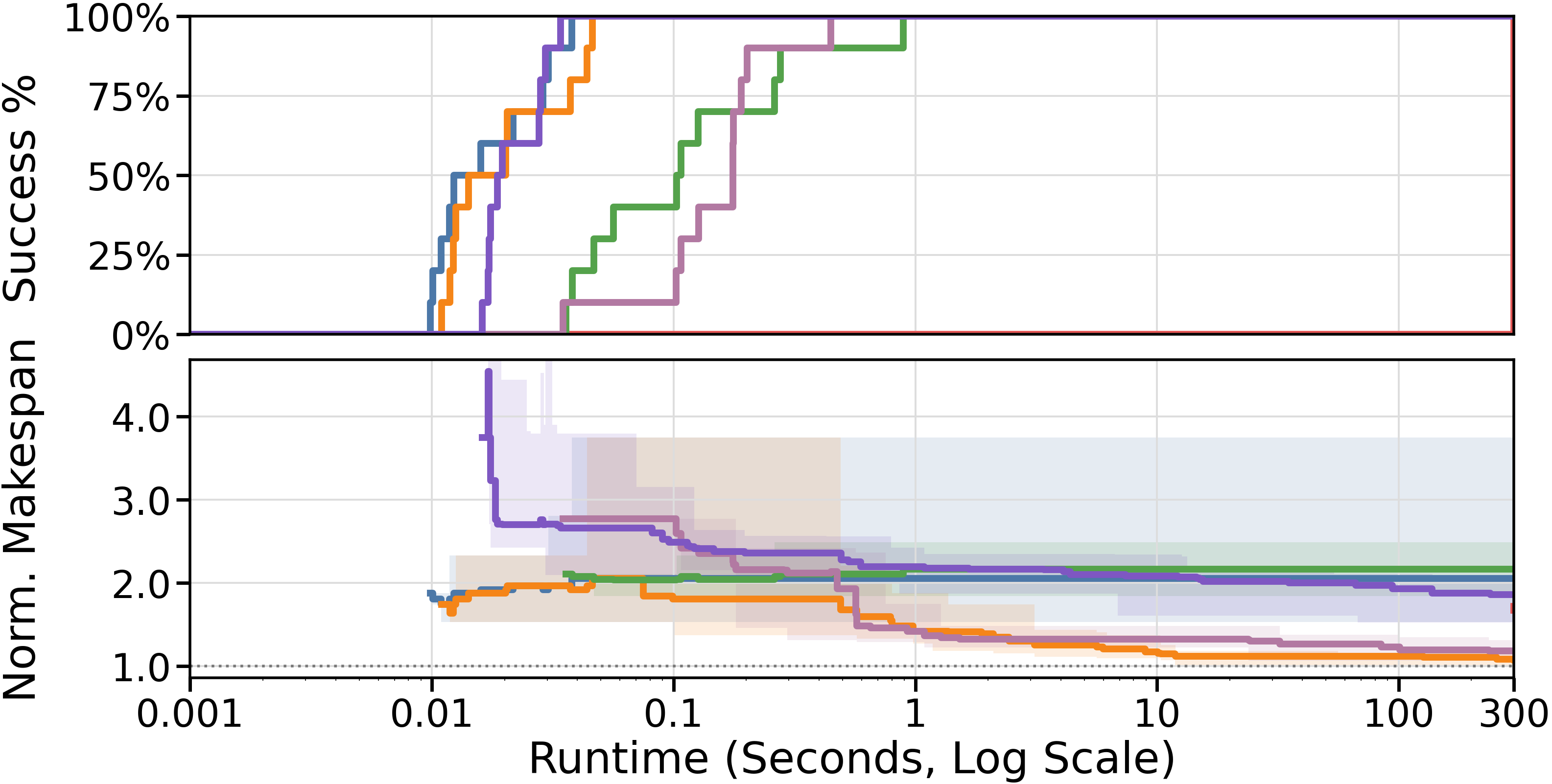}
        \caption{Planar - 4}
        \label{fig:results:planar}
    \end{subfigure}
    \hfill
    \begin{subfigure}[t]{0.45\textwidth}
        \centering
        \includegraphics[width=\linewidth]{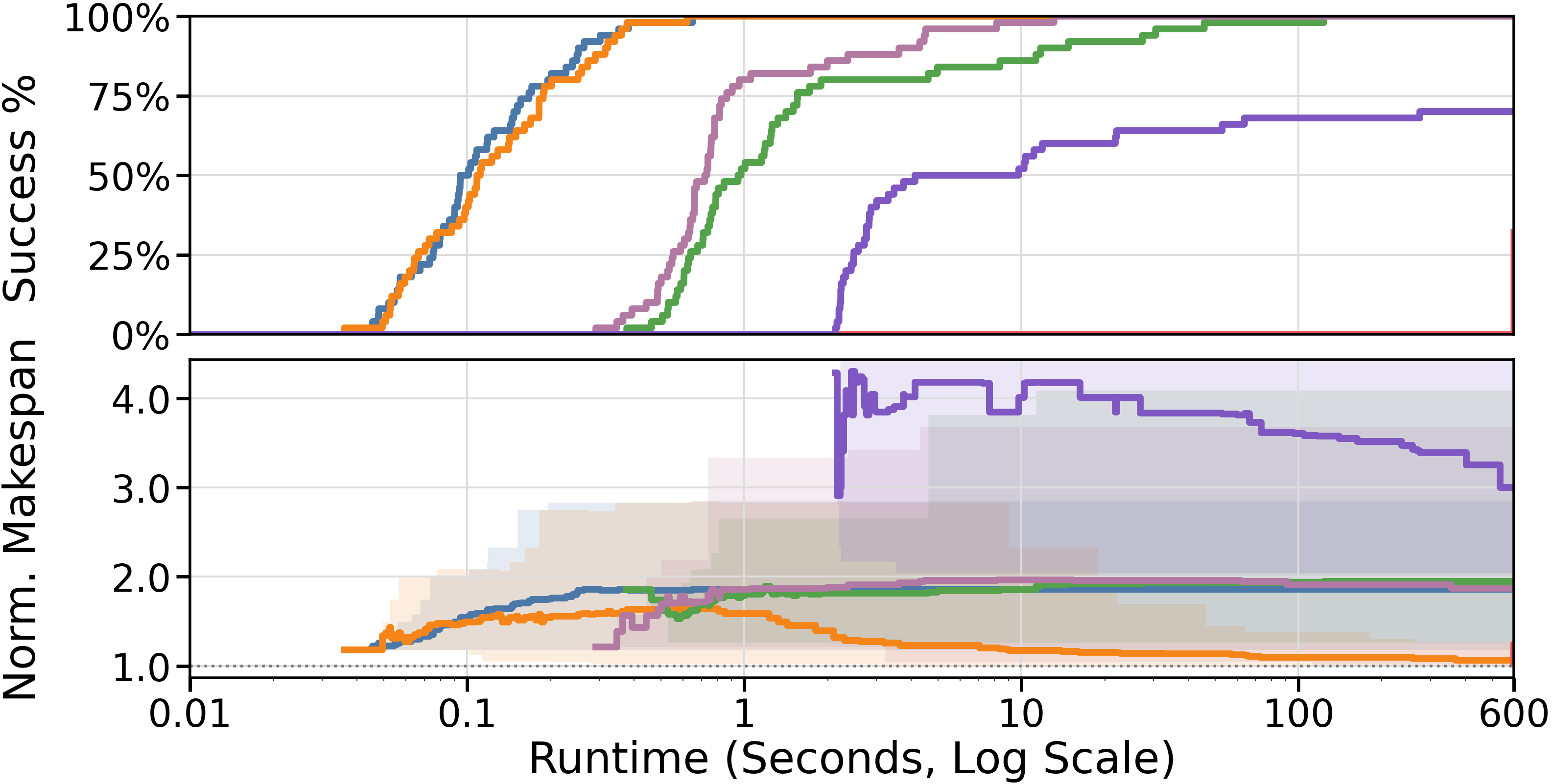}
        \caption{Panda Cage - 4}
        \label{fig:results:panda}
    \end{subfigure}
    \\
    \begin{subfigure}[t]{1.0\textwidth}
        \centering
        \includegraphics[width=\linewidth]{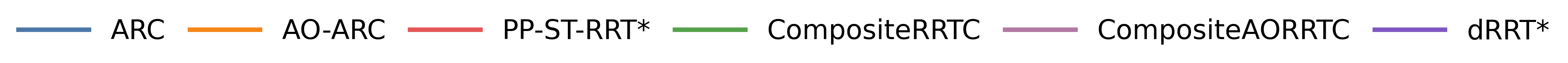}
    \end{subfigure}
    \vspace{-2.0em}
    \caption{
    \small
    (a-c) A representative team size for each 2D scenario.
    (d) Combined statistics for all five tasks for the Panda Cage with four robots.
    (a-d) Cumulative success rate in the top panel. Normalized median makespan (with the range highlighted) in the bottom panel.
    Both are plotted over logarithmic runtime.
    {\ppstrrtstar} values are represented by vertical line at the time limit.
    }
    \label{fig:results}
    \vspace{-2em}
\end{figure*}

We conduct a set of experiments to evaluate time to an initial solution and anytime properties of {\aoarc}.
Section~\ref{sec:experimental-design} details our experimental design.
Section~\ref{sec:results} presents and discusses our results.

\subsection{Experimental Design}
\label{sec:experimental-design}

The experiments are designed to evaluate the performance of {\aoarc} with varying numbers of robots, coordination requirements, and robot complexity.

\subsubsection{Scenarios} 
We include a set of 2D scenarios to control the coordination requirements as the number of robots increases.
These include two 2D mobile robot scenarios with high and low coordination requirements (Figs.~\ref{fig:scenarios-2d-cross},~\ref{fig:scenarios-2d-circle}), and a 2D planar manipulator to increase the robot complexity while forcing coordination (Fig.~\ref{fig:scenarios-planar-cross}).
Additionally, we include a 3D Panda manipulator scenario with 4 and 8 robots operating in a shared workspace.
We test five tasks where each robot has a randomly sampled start and goal with the end effector position inside the cage (Fig.~\ref{fig:panda}).
Each scenario and task was run for 10 trials.
2D scenarios and 3D Panda tasks were run for 300 and 600 seconds, respectively.

\subsubsection{Metrics}
For all scenarios, we measured the time to first solution and recorded the improvement in solution quality over time.
An empirically derived best-known reference makespan $\hat{J}$ is used to normalize the reported makespans.

\subsubsection{Baselines}
We include five baselines.
For anytime behavior, we include a composite space adaptation of {\aorrtc}~\cite{wilson2025aorrtc} (denoted Comp{\aorrtc}) and the multi-robot specific {\drrtstar}~\cite{ssdhb-dsaiammp-20}.
For initial solution time, we include the feasibility planners {\arc} and Comp{\rrtc}.
Additionally, to measure the impact of decoupling on solution quality, we include {\ppstrrtstar}~\cite{kerimov2025si}, the prioritized adaptation of {\strrtstar}~\cite{grothe2022st}.
A random priority order is sampled for each trial.
As prioritized approaches plan for individual robots sequentially, we divide the time limit evenly between each {\strrtstar} call.
This is not a fair comparison for initial solution or anytime behavior because the full solution is not constructed until planning is complete for the last robot 
(addressing this limitation is outside the scope of this work).
It does illustrate the impact of complete decoupling on makespan values for different types of scenarios.

\subsubsection{Hardware}
All experimental trials were run on a Linux workstation (x86\_64, kernel 6.8) equipped with an Intel Core i7-14700F CPU (up to 5.4 GHz) and 32 GB of RAM.

\subsection{Results}
\label{sec:results}

For each scenario, we include plots of the cumulative success rate and median current best makespan across all trials for the largest team size for which all anytime methods obtain $>50\%$ success rate (Fig.~\ref{fig:results}).
The results for the full set of scenarios are reported in Table~\ref{tab:full}.

\begin{table*}[t]
\centering
\tiny
\setlength{\tabcolsep}{2pt}
\renewcommand{\arraystretch}{1.15}
\caption{\small
Combined summary of 2D mobile, planar manipulator, and Panda cage planning results.
Success is the percentage of runs that found a solution by the planning deadline, $T_{\mathrm{init}}$ is the median initial solution time in seconds, $J_{\mathrm{init}}$ and $J_{\mathrm{final}}$ are median makespans normalized by the best-known reference $\hat{J}$, and $J_{\le 1.10}$ is the percentage of runs whose final makespan is within 10\% of $\hat{J}$.
}
\label{tab:combined-planning-summary}
\resizebox{\textwidth}{!}{%
\begin{tabular}{ll|ccccc|ccccc|ccccc}
\hline
 &  & \multicolumn{5}{|c|}{$n=4$} & \multicolumn{5}{c|}{$n=8$} & \multicolumn{5}{c}{$n=16$} \\
\cline{3-7}\cline{8-12}\cline{13-17}
Setting & Method & Success & $T_{\mathrm{init}}$ & $J_{\mathrm{init}}$ & $J_{\mathrm{final}}$ & $J_{\le 1.10}$ & Success & $T_{\mathrm{init}}$ & $J_{\mathrm{init}}$ & $J_{\mathrm{final}}$ & $J_{\le 1.10}$ & Success & $T_{\mathrm{init}}$ & $J_{\mathrm{init}}$ & $J_{\mathrm{final}}$ & $J_{\le 1.10}$ \\
\hline
\multirow{6}{*}{\rotatebox[origin=c]{90}{Cross}} & ARC & \textbf{100\%} & \textbf{0.01} & 1.28 & 1.28 & 0\% & \textbf{100\%} & \textbf{0.03} & \textbf{1.18} & 1.18 & 0\% & \textbf{100\%} & \textbf{0.09} & \textbf{1.11} & 1.11 & 20\% \\
 & \cellcolor{gray!12}AO-ARC & \cellcolor{gray!12}\textbf{100\%} & \cellcolor{gray!12}\textbf{0.01} & \cellcolor{gray!12}1.28 & \cellcolor{gray!12}\textbf{1.00} & \cellcolor{gray!12}\textbf{100\%} & \cellcolor{gray!12}\textbf{100\%} & \cellcolor{gray!12}\textbf{0.03} & \cellcolor{gray!12}\textbf{1.18} & \cellcolor{gray!12}\textbf{1.00} & \cellcolor{gray!12}\textbf{100\%} & \cellcolor{gray!12}\textbf{100\%} & \cellcolor{gray!12}0.10 & \cellcolor{gray!12}\textbf{1.11} & \cellcolor{gray!12}\textbf{1.00} & \cellcolor{gray!12}\textbf{100\%} \\
 & CompRRTC & \textbf{100\%} & 0.04 & 1.17 & 1.17 & 20\% & \textbf{100\%} & 0.14 & 1.33 & 1.33 & 0\% & \textbf{100\%} & 2.40 & 2.02 & 2.02 & 0\% \\
 & \cellcolor{gray!12}CompAORRTC & \cellcolor{gray!12}\textbf{100\%} & \cellcolor{gray!12}0.02 & \cellcolor{gray!12}\textbf{1.15} & \cellcolor{gray!12}\textbf{1.00} & \cellcolor{gray!12}\textbf{100\%} & \cellcolor{gray!12}\textbf{100\%} & \cellcolor{gray!12}0.18 & \cellcolor{gray!12}1.46 & \cellcolor{gray!12}1.01 & \cellcolor{gray!12}\textbf{100\%} & \cellcolor{gray!12}\textbf{100\%} & \cellcolor{gray!12}2.26 & \cellcolor{gray!12}2.41 & \cellcolor{gray!12}1.01 & \cellcolor{gray!12}70\% \\
 & dRRT* & \textbf{100\%} & 0.02 & 1.94 & 1.27 & 0\% & \textbf{100\%} & \textbf{0.03} & 2.12 & 1.85 & 0\% & \textbf{100\%} & 0.12 & 3.84 & 3.02 & 0\% \\
 & \cellcolor{gray!12}PP-ST-RRT* & \cellcolor{gray!12}\textbf{100\%} & \cellcolor{gray!12}$T_{\mathrm{final}}$ & \cellcolor{gray!12}-- & \cellcolor{gray!12}\textbf{1.00} & \cellcolor{gray!12}\textbf{100\%} & \cellcolor{gray!12}\textbf{100\%} & \cellcolor{gray!12}$T_{\mathrm{final}}$ & \cellcolor{gray!12}-- & \cellcolor{gray!12}\textbf{1.00} & \cellcolor{gray!12}\textbf{100\%} & \cellcolor{gray!12}\textbf{100\%} & \cellcolor{gray!12}$T_{\mathrm{final}}$ & \cellcolor{gray!12}-- & \cellcolor{gray!12}\textbf{1.00} & \cellcolor{gray!12}\textbf{100\%} \\
\hline
\multirow{6}{*}{\rotatebox[origin=c]{90}{Circle}} & ARC & \textbf{100\%} & \textbf{0.01} & 1.90 & 1.90 & 0\% & \textbf{100\%} & 0.06 & 3.27 & 3.27 & 0\% & \textbf{100\%} & 3.04 & 5.27 & 5.27 & 0\% \\
 & \cellcolor{gray!12}AO-ARC & \cellcolor{gray!12}\textbf{100\%} & \cellcolor{gray!12}\textbf{0.01} & \cellcolor{gray!12}1.90 & \cellcolor{gray!12}\textbf{1.00} & \cellcolor{gray!12}\textbf{100\%} & \cellcolor{gray!12}\textbf{100\%} & \cellcolor{gray!12}0.07 & \cellcolor{gray!12}3.27 & \cellcolor{gray!12}1.13 & \cellcolor{gray!12}40\% & \cellcolor{gray!12}\textbf{100\%} & \cellcolor{gray!12}\textbf{2.65} & \cellcolor{gray!12}5.27 & \cellcolor{gray!12}5.27 & \cellcolor{gray!12}0\% \\
 & CompRRTC & \textbf{100\%} & 0.02 & \textbf{1.29} & 1.29 & 0\% & \textbf{100\%} & 0.18 & \textbf{1.84} & 1.84 & 0\% & \textbf{100\%} & 64.88 & 4.06 & 4.06 & 0\% \\
 & \cellcolor{gray!12}CompAORRTC & \cellcolor{gray!12}\textbf{100\%} & \cellcolor{gray!12}0.02 & \cellcolor{gray!12}1.36 & \cellcolor{gray!12}\textbf{1.00} & \cellcolor{gray!12}\textbf{100\%} & \cellcolor{gray!12}\textbf{100\%} & \cellcolor{gray!12}0.18 & \cellcolor{gray!12}2.30 & \cellcolor{gray!12}1.14 & \cellcolor{gray!12}0\% & \cellcolor{gray!12}70\% & \cellcolor{gray!12}135.04 & \cellcolor{gray!12}\textbf{3.91} & \cellcolor{gray!12}3.91 & \cellcolor{gray!12}0\% \\
 & dRRT* & \textbf{100\%} & 0.02 & 1.94 & 1.36 & 0\% & \textbf{100\%} & \textbf{0.05} & 3.87 & 2.23 & 0\% & 50\% & 68.70 & 6.39 & 6.39 & 0\% \\
 & \cellcolor{gray!12}PP-ST-RRT* & \cellcolor{gray!12}\textbf{100\%} & \cellcolor{gray!12}$T_{\mathrm{final}}$ & \cellcolor{gray!12}-- & \cellcolor{gray!12}1.04 & \cellcolor{gray!12}\textbf{100\%} & \cellcolor{gray!12}\textbf{100\%} & \cellcolor{gray!12}$T_{\mathrm{final}}$ & \cellcolor{gray!12}-- & \cellcolor{gray!12}\textbf{1.02} & \cellcolor{gray!12}\textbf{100\%} & \cellcolor{gray!12}\textbf{100\%} & \cellcolor{gray!12}$T_{\mathrm{final}}$ & \cellcolor{gray!12}-- & \cellcolor{gray!12}\textbf{1.02} & \cellcolor{gray!12}\textbf{100\%} \\
\hline
\multirow{6}{*}{\rotatebox[origin=c]{90}{Planar}} & ARC & \textbf{100\%} & \textbf{0.01} & \textbf{2.05} & 2.05 & 0\% & \textbf{100\%} & \textbf{0.05} & 2.33 & 2.33 & 0\% & \textbf{100\%} & \textbf{0.12} & \textbf{2.39} & 2.39 & 0\% \\
 & \cellcolor{gray!12}AO-ARC & \cellcolor{gray!12}\textbf{100\%} & \cellcolor{gray!12}0.02 & \cellcolor{gray!12}\textbf{2.05} & \cellcolor{gray!12}\textbf{1.07} & \cellcolor{gray!12}\textbf{70\%} & \cellcolor{gray!12}\textbf{100\%} & \cellcolor{gray!12}0.08 & \cellcolor{gray!12}2.33 & \cellcolor{gray!12}\textbf{1.15} & \cellcolor{gray!12}\textbf{40\%} & \cellcolor{gray!12}\textbf{100\%} & \cellcolor{gray!12}0.16 & \cellcolor{gray!12}\textbf{2.39} & \cellcolor{gray!12}\textbf{1.05} & \cellcolor{gray!12}\textbf{90\%} \\
 & CompRRTC & \textbf{100\%} & 0.11 & 2.17 & 2.17 & 0\% & 40\% & 31.24 & \textbf{2.20} & 2.20 & 0\% & 0\% & -- & -- & -- & 0\% \\
 & \cellcolor{gray!12}CompAORRTC & \cellcolor{gray!12}\textbf{100\%} & \cellcolor{gray!12}0.18 & \cellcolor{gray!12}2.20 & \cellcolor{gray!12}1.18 & \cellcolor{gray!12}20\% & \cellcolor{gray!12}70\% & \cellcolor{gray!12}81.26 & \cellcolor{gray!12}2.22 & \cellcolor{gray!12}2.22 & \cellcolor{gray!12}0\% & \cellcolor{gray!12}0\% & \cellcolor{gray!12}-- & \cellcolor{gray!12}-- & \cellcolor{gray!12}-- & \cellcolor{gray!12}0\% \\
 & dRRT* & \textbf{100\%} & 0.02 & 3.03 & 1.86 & 0\% & \textbf{100\%} & 0.28 & 4.53 & 3.05 & 0\% & 10\% & 4.08 & 14.78 & 14.78 & 0\% \\
 & \cellcolor{gray!12}PP-ST-RRT* & \cellcolor{gray!12}\textbf{100\%} & \cellcolor{gray!12}$T_{\mathrm{final}}$ & \cellcolor{gray!12}-- & \cellcolor{gray!12}1.67 & \cellcolor{gray!12}0\% & \cellcolor{gray!12}\textbf{100\%} & \cellcolor{gray!12}$T_{\mathrm{final}}$ & \cellcolor{gray!12}-- & \cellcolor{gray!12}1.97 & \cellcolor{gray!12}0\% & \cellcolor{gray!12}\textbf{100\%} & \cellcolor{gray!12}$T_{\mathrm{final}}$ & \cellcolor{gray!12}-- & \cellcolor{gray!12}2.38 & \cellcolor{gray!12}0\% \\
\hline
\multirow{6}{*}{\rotatebox[origin=c]{90}{Panda Task 1}} & ARC & \textbf{100\%} & 0.07 & \textbf{1.66} & 1.66 & 0\% & \textbf{100\%} & \textbf{0.57} & \textbf{1.92} & 1.92 & 0\% & -- & -- & -- & -- & -- \\
 & \cellcolor{gray!12}AO-ARC & \cellcolor{gray!12}\textbf{100\%} & \cellcolor{gray!12}\textbf{0.06} & \cellcolor{gray!12}\textbf{1.66} & \cellcolor{gray!12}\textbf{1.02} & \cellcolor{gray!12}\textbf{80\%} & \cellcolor{gray!12}\textbf{100\%} & \cellcolor{gray!12}0.67 & \cellcolor{gray!12}\textbf{1.92} & \cellcolor{gray!12}\textbf{1.35} & \cellcolor{gray!12}\textbf{30\%} & \cellcolor{gray!12}-- & \cellcolor{gray!12}-- & \cellcolor{gray!12}-- & \cellcolor{gray!12}-- & \cellcolor{gray!12}-- \\
 & CompRRTC & \textbf{100\%} & 0.75 & 2.02 & 2.02 & 0\% & 0\% & -- & -- & -- & 0\% & -- & -- & -- & -- & -- \\
 & \cellcolor{gray!12}CompAORRTC & \cellcolor{gray!12}\textbf{100\%} & \cellcolor{gray!12}0.63 & \cellcolor{gray!12}2.06 & \cellcolor{gray!12}2.06 & \cellcolor{gray!12}0\% & \cellcolor{gray!12}0\% & \cellcolor{gray!12}-- & \cellcolor{gray!12}-- & \cellcolor{gray!12}-- & \cellcolor{gray!12}0\% & \cellcolor{gray!12}-- & \cellcolor{gray!12}-- & \cellcolor{gray!12}-- & \cellcolor{gray!12}-- & \cellcolor{gray!12}-- \\
 & dRRT* & 90\% & 2.58 & 4.75 & 3.39 & 0\% & 0\% & -- & -- & -- & 0\% & -- & -- & -- & -- & -- \\
 & \cellcolor{gray!12}PP-ST-RRT* & \cellcolor{gray!12}0\% & \cellcolor{gray!12}$T_{\mathrm{final}}$ & \cellcolor{gray!12}-- & \cellcolor{gray!12}-- & \cellcolor{gray!12}0\% & \cellcolor{gray!12}0\% & \cellcolor{gray!12}$T_{\mathrm{final}}$ & \cellcolor{gray!12}-- & \cellcolor{gray!12}-- & \cellcolor{gray!12}0\% & \cellcolor{gray!12}-- & \cellcolor{gray!12}-- & \cellcolor{gray!12}-- & \cellcolor{gray!12}-- & \cellcolor{gray!12}-- \\
\hline
\multirow{6}{*}{\rotatebox[origin=c]{90}{Panda Task 2}} & ARC & \textbf{100\%} & \textbf{0.09} & \textbf{1.64} & 1.64 & 0\% & \textbf{100\%} & \textbf{0.47} & \textbf{2.39} & 2.39 & 0\% & -- & -- & -- & -- & -- \\
 & \cellcolor{gray!12}AO-ARC & \cellcolor{gray!12}\textbf{100\%} & \cellcolor{gray!12}0.11 & \cellcolor{gray!12}\textbf{1.64} & \cellcolor{gray!12}\textbf{1.10} & \cellcolor{gray!12}\textbf{50\%} & \cellcolor{gray!12}\textbf{100\%} & \cellcolor{gray!12}0.50 & \cellcolor{gray!12}\textbf{2.39} & \cellcolor{gray!12}\textbf{1.02} & \cellcolor{gray!12}\textbf{90\%} & \cellcolor{gray!12}-- & \cellcolor{gray!12}-- & \cellcolor{gray!12}-- & \cellcolor{gray!12}-- & \cellcolor{gray!12}-- \\
 & CompRRTC & \textbf{100\%} & 1.29 & 1.70 & 1.70 & 0\% & 50\% & 491.41 & 4.68 & 4.68 & 0\% & -- & -- & -- & -- & -- \\
 & \cellcolor{gray!12}CompAORRTC & \cellcolor{gray!12}\textbf{100\%} & \cellcolor{gray!12}0.65 & \cellcolor{gray!12}1.73 & \cellcolor{gray!12}1.70 & \cellcolor{gray!12}0\% & \cellcolor{gray!12}30\% & \cellcolor{gray!12}514.76 & \cellcolor{gray!12}3.06 & \cellcolor{gray!12}3.06 & \cellcolor{gray!12}0\% & \cellcolor{gray!12}-- & \cellcolor{gray!12}-- & \cellcolor{gray!12}-- & \cellcolor{gray!12}-- & \cellcolor{gray!12}-- \\
 & dRRT* & 80\% & 11.50 & 3.42 & 2.93 & 0\% & 0\% & -- & -- & -- & 0\% & -- & -- & -- & -- & -- \\
 & \cellcolor{gray!12}PP-ST-RRT* & \cellcolor{gray!12}30\% & \cellcolor{gray!12}$T_{\mathrm{final}}$ & \cellcolor{gray!12}-- & \cellcolor{gray!12}\textbf{1.10} & \cellcolor{gray!12}10\% & \cellcolor{gray!12}30\% & \cellcolor{gray!12}$T_{\mathrm{final}}$ & \cellcolor{gray!12}-- & \cellcolor{gray!12}1.56 & \cellcolor{gray!12}0\% & \cellcolor{gray!12}-- & \cellcolor{gray!12}-- & \cellcolor{gray!12}-- & \cellcolor{gray!12}-- & \cellcolor{gray!12}-- \\
\hline
\multirow{6}{*}{\rotatebox[origin=c]{90}{Panda Task 3}} & ARC & \textbf{100\%} & \textbf{0.06} & \textbf{1.41} & 1.41 & 0\% & \textbf{40\%} & 2.08 & \textbf{1.67} & 1.67 & 0\% & -- & -- & -- & -- & -- \\
 & \cellcolor{gray!12}AO-ARC & \cellcolor{gray!12}\textbf{100\%} & \cellcolor{gray!12}0.07 & \cellcolor{gray!12}\textbf{1.41} & \cellcolor{gray!12}\textbf{1.01} & \cellcolor{gray!12}\textbf{100\%} & \cellcolor{gray!12}\textbf{40\%} & \cellcolor{gray!12}\textbf{1.79} & \cellcolor{gray!12}\textbf{1.67} & \cellcolor{gray!12}\textbf{1.23} & \cellcolor{gray!12}\textbf{20\%} & \cellcolor{gray!12}-- & \cellcolor{gray!12}-- & \cellcolor{gray!12}-- & \cellcolor{gray!12}-- & \cellcolor{gray!12}-- \\
 & CompRRTC & \textbf{100\%} & 0.57 & 1.56 & 1.56 & 0\% & 0\% & -- & -- & -- & 0\% & -- & -- & -- & -- & -- \\
 & \cellcolor{gray!12}CompAORRTC & \cellcolor{gray!12}\textbf{100\%} & \cellcolor{gray!12}0.64 & \cellcolor{gray!12}1.56 & \cellcolor{gray!12}1.04 & \cellcolor{gray!12}\textbf{100\%} & \cellcolor{gray!12}0\% & \cellcolor{gray!12}-- & \cellcolor{gray!12}-- & \cellcolor{gray!12}-- & \cellcolor{gray!12}0\% & \cellcolor{gray!12}-- & \cellcolor{gray!12}-- & \cellcolor{gray!12}-- & \cellcolor{gray!12}-- & \cellcolor{gray!12}-- \\
 & dRRT* & \textbf{100\%} & 2.40 & 5.01 & 2.62 & 0\% & 0\% & -- & -- & -- & 0\% & -- & -- & -- & -- & -- \\
 & \cellcolor{gray!12}PP-ST-RRT* & \cellcolor{gray!12}70\% & \cellcolor{gray!12}$T_{\mathrm{final}}$ & \cellcolor{gray!12}-- & \cellcolor{gray!12}1.04 & \cellcolor{gray!12}70\% & \cellcolor{gray!12}0\% & \cellcolor{gray!12}$T_{\mathrm{final}}$ & \cellcolor{gray!12}-- & \cellcolor{gray!12}-- & \cellcolor{gray!12}0\% & \cellcolor{gray!12}-- & \cellcolor{gray!12}-- & \cellcolor{gray!12}-- & \cellcolor{gray!12}-- & \cellcolor{gray!12}-- \\
\hline
\multirow{6}{*}{\rotatebox[origin=c]{90}{Panda Task 4}} & ARC & \textbf{100\%} & \textbf{0.18} & \textbf{2.26} & 2.26 & 0\% & \textbf{90\%} & 1.50 & \textbf{1.94} & 1.94 & 0\% & -- & -- & -- & -- & -- \\
 & \cellcolor{gray!12}AO-ARC & \cellcolor{gray!12}\textbf{100\%} & \cellcolor{gray!12}0.28 & \cellcolor{gray!12}\textbf{2.26} & \cellcolor{gray!12}\textbf{1.14} & \cellcolor{gray!12}\textbf{20\%} & \cellcolor{gray!12}\textbf{90\%} & \cellcolor{gray!12}\textbf{1.34} & \cellcolor{gray!12}\textbf{1.94} & \cellcolor{gray!12}\textbf{1.50} & \cellcolor{gray!12}\textbf{10\%} & \cellcolor{gray!12}-- & \cellcolor{gray!12}-- & \cellcolor{gray!12}-- & \cellcolor{gray!12}-- & \cellcolor{gray!12}-- \\
 & CompRRTC & \textbf{100\%} & 13.25 & 3.20 & 3.20 & 0\% & 0\% & -- & -- & -- & 0\% & -- & -- & -- & -- & -- \\
 & \cellcolor{gray!12}CompAORRTC & \cellcolor{gray!12}\textbf{100\%} & \cellcolor{gray!12}3.96 & \cellcolor{gray!12}2.65 & \cellcolor{gray!12}2.65 & \cellcolor{gray!12}0\% & \cellcolor{gray!12}0\% & \cellcolor{gray!12}-- & \cellcolor{gray!12}-- & \cellcolor{gray!12}-- & \cellcolor{gray!12}0\% & \cellcolor{gray!12}-- & \cellcolor{gray!12}-- & \cellcolor{gray!12}-- & \cellcolor{gray!12}-- & \cellcolor{gray!12}-- \\
 & dRRT* & 20\% & 2.65 & 7.47 & 4.77 & 0\% & 0\% & -- & -- & -- & 0\% & -- & -- & -- & -- & -- \\
 & \cellcolor{gray!12}PP-ST-RRT* & \cellcolor{gray!12}50\% & \cellcolor{gray!12}$T_{\mathrm{final}}$ & \cellcolor{gray!12}-- & \cellcolor{gray!12}1.31 & \cellcolor{gray!12}0\% & \cellcolor{gray!12}0\% & \cellcolor{gray!12}$T_{\mathrm{final}}$ & \cellcolor{gray!12}-- & \cellcolor{gray!12}-- & \cellcolor{gray!12}0\% & \cellcolor{gray!12}-- & \cellcolor{gray!12}-- & \cellcolor{gray!12}-- & \cellcolor{gray!12}-- & \cellcolor{gray!12}-- \\
\hline
\multirow{6}{*}{\rotatebox[origin=c]{90}{Panda Task 5}} & ARC & \textbf{100\%} & 0.21 & \textbf{1.83} & 1.83 & 0\% & \textbf{100\%} & \textbf{1.90} & \textbf{2.53} & 2.53 & 0\% & -- & -- & -- & -- & -- \\
 & \cellcolor{gray!12}AO-ARC & \cellcolor{gray!12}\textbf{100\%} & \cellcolor{gray!12}\textbf{0.20} & \cellcolor{gray!12}\textbf{1.83} & \cellcolor{gray!12}\textbf{1.14} & \cellcolor{gray!12}\textbf{40\%} & \cellcolor{gray!12}\textbf{100\%} & \cellcolor{gray!12}1.91 & \cellcolor{gray!12}\textbf{2.53} & \cellcolor{gray!12}2.26 & \cellcolor{gray!12}0\% & \cellcolor{gray!12}-- & \cellcolor{gray!12}-- & \cellcolor{gray!12}-- & \cellcolor{gray!12}-- & \cellcolor{gray!12}-- \\
 & CompRRTC & \textbf{100\%} & 1.08 & 2.00 & 2.00 & 0\% & 0\% & -- & -- & -- & 0\% & -- & -- & -- & -- & -- \\
 & \cellcolor{gray!12}CompAORRTC & \cellcolor{gray!12}\textbf{100\%} & \cellcolor{gray!12}0.70 & \cellcolor{gray!12}1.98 & \cellcolor{gray!12}1.98 & \cellcolor{gray!12}0\% & \cellcolor{gray!12}0\% & \cellcolor{gray!12}-- & \cellcolor{gray!12}-- & \cellcolor{gray!12}-- & \cellcolor{gray!12}0\% & \cellcolor{gray!12}-- & \cellcolor{gray!12}-- & \cellcolor{gray!12}-- & \cellcolor{gray!12}-- & \cellcolor{gray!12}-- \\
 & dRRT* & 60\% & 10.29 & 4.51 & 3.94 & 0\% & 0\% & -- & -- & -- & 0\% & -- & -- & -- & -- & -- \\
 & \cellcolor{gray!12}PP-ST-RRT* & \cellcolor{gray!12}10\% & \cellcolor{gray!12}$T_{\mathrm{final}}$ & \cellcolor{gray!12}-- & \cellcolor{gray!12}1.33 & \cellcolor{gray!12}0\% & \cellcolor{gray!12}10\% & \cellcolor{gray!12}$T_{\mathrm{final}}$ & \cellcolor{gray!12}-- & \cellcolor{gray!12}\textbf{1.00} & \cellcolor{gray!12}\textbf{10\%} & \cellcolor{gray!12}-- & \cellcolor{gray!12}-- & \cellcolor{gray!12}-- & \cellcolor{gray!12}-- & \cellcolor{gray!12}-- \\
\hline
\end{tabular}
}
\label{tab:full}
\end{table*}

\subsubsection{2D Scenarios}
In all 2D experiments, {\arc} and {\aoarc} were first and second in initial solution time $T_{init}$, often by a considerable margin as the number of robots increased.
This is consistent with prior {\arc} results and {\aoarc} maintained this with a marginal overhead.

In 8/9 of the 2D experiments, {\aoarc} achieved a median final makespan $J_{\mathrm{final}}$ and percentage of trials with $J_{\mathrm{final}}\le 1.10\hat{J}$ (denoted $J_{\le 1.10}$ in Table~\ref{tab:full}) equal to or better than the two other asymptotically optimal methods, Comp{\aorrtc} and {\drrtstar}.
The one exception is the 16 robot circle scenario, where no method improved its median makespan from the initial solution and {\aoarc} solved 100\% of the trials while Comp{\aorrtc} and {\drrtstar} solved 70\% and 50\%.
The prioritized multi-robot variant of {\strrtstar} was able to produce high quality solutions after 300 seconds in the 2D mobile robot scenarios where decoupled reasoning was sufficient, but fell behind {\aoarc} in final solution quality as the robot complexity increased.

\subsubsection{3D Panda Manipulator Teams}
In the 3D Panda cage scenario, we again see the faster initial solution times of {\arc} and {\aoarc} and the significantly higher success rate as the team size increases.
Across all five tasks for both 4 and 8 robot scenarios, {\aoarc} produces a better median $J_{\mathrm{final}}$ than Comp{\aorrtc} and {\drrtstar} and only loses to {\ppstrrtstar} on Task 5 for 8 robots, where {\ppstrrtstar} solves only one of the ten trials compared to the 100\% success rate of {\aoarc}.
With a makespan cost metric, this likely implies that the optimal solution to task 5 is dominated by one long robot trajectory and the one successful trial of {\ppstrrtstar} had it early in the randomized robot priority list.
This trend of high quality solutions with low success rate is consistent across all of the Panda tasks for {\ppstrrtstar}.

{\arc} and {\aoarc} also show more robustness across the varying task difficulties.
Comp{\aorrtc} did better than {\drrtstar} but still failed to solve any but the easiest 8 panda task (as determined by the reported {\aoarc} $T_{init}$ and $J_{\mathrm{final}}$ values).
In that task, the {\arc} variants demonstrated a 1000X speedup in initial solution time, at least double the success rate, and a 3X reduction in normalized solution cost.

\section{Conclusion}
\label{sec:conclusion}

We present {\aoarc}, a fast, anytime {\mrmp} approach matching state-of-the-art feasibility solvers in initial planning time and scalability while providing asymptotic optimality and fast solution convergence.
{\aoarc} achieves this by combining the {\aox} meta-algorithm with a makespan bound that remains constant across the robot decompositions used by {\arc}.
Future work will extend these techniques to other standard {\mrmp} cost metrics such as sum-of-costs and more complex state spaces like~\cite{qin2025k} and~\cite{hartmann2025sampling}.

\bibliographystyle{IEEEtran}
\bibliography{robotics}

\end{document}